\documentclass[10pt]{article} 
\usepackage[preprint]{tmlr}

\usepackage{algorithm}
\usepackage{algpseudocode}
\usepackage{graphicx} 
\usepackage{subcaption}
\usepackage{multirow}
\usepackage{hyperref}
\usepackage{array}
\usepackage{fullpage}
\usepackage{adjustbox}
\usepackage{lineno}

\usepackage{url}
\usepackage{xcolor}

\newcommand{\old}{\mathrm{old}}
\newcommand{\train}{\mathrm{train}}
\newcommand{\infer}{\mathrm{infer}}
\newcommand{\alg}{Adaptive Layerwise Perturbation}

\usepackage{multirow}
\usepackage{threeparttable}
\usepackage{makecell}
\usepackage{booktabs} 

\usepackage{amsmath,amsfonts,bm}

\def\eqref#1{Equation~(\ref{#1})}

\def\1{\bm{1}}

\RequirePackage{amsmath}
\RequirePackage{amssymb}
\RequirePackage{amsthm}
\RequirePackage{bm} 
\RequirePackage{url}
\usepackage{multirow}
\usepackage{natbib}
\usepackage{graphicx}
\usepackage{makecell}
\usepackage{booktabs}
\usepackage{array}
\usepackage{url}

\usepackage{dsfont}

\usepackage{enumerate}
\usepackage[OT1]{fontenc}
\usepackage{natbib}

\usepackage{mathrsfs}

\usepackage{xcolor}
\usepackage{hyperref}
\usepackage{float}  
\usepackage{colortbl}

\def\##1\#{\begin{align}#1\end{align}}
\def\$#1\${\begin{align*}#1\end{align*}}

\let\tilde\widetilde

\newcommand{\E}{\mathbb{E}}

\usepackage{xcolor}

\definecolor{red1}{HTML}{f47983}
\definecolor{blue1}{HTML}{3eede7}
\definecolor{yellow1}{HTML}{f5dd6f}

\newtheorem{theorem}{Theorem}
\newtheorem{condition}{Condition}

\newtheorem{remark}{Remark}

\newtheorem{lemma}{Lemma}

\title{Adaptive Layerwise Perturbation: Unifying Off-Policy Corrections for LLM RL}

\author{Chenlu Ye\thanks{Equal contribution.
 Correspondence to Chenlu Ye (\url{chenluy3@illinois.edu}).}$\ ^\circ$,\quad Xuanchang Zhang$^{*\circ}$,\quad Yifan Hao$^{*\circ}$,\\
 Zhou Yu$^{\diamondsuit}$,\quad Ziji Zhang$^{\diamondsuit}$,\quad Abhinav Gullapalli$^{\diamondsuit}$,\quad Hao Chen$^{\diamondsuit}$, \quad Jing Huang$^{\diamondsuit}$,\quad Tong Zhang$^\circ$
 \\\\
\textnormal{\emph{$^\circ$University of Illinois Urbana-Champaign
\quad$^\diamondsuit$Amazon}}}

\def\month{MM}  
\def\year{YYYY} 
\def\openreview{\url{https://openreview.net/forum?id=XXXX}} 

\begin{document}

\maketitle

\begin{abstract}
Off-policy problems such as policy staleness and training--inference mismatch have become a major bottleneck for training stability and further exploration in LLM RL. The distribution gap between the inference and updated policies grows because of the techniques to enhance inference efficiency, leading to heavy-tailed importance ratios. Heavy-tailed ratios arise when the policy is locally sharp, which further inflates gradients and can push updates outside the trust region. To address this, we propose Adaptive Layerwise Perturbation (ALP), which injects small learnable perturbations into the input hidden states of each layer during updates and uses the resulting perturbed policy as the numerator of the importance ratio against the unchanged inference policy in the objective. Intuitively, by adding controlled noise to intermediate representations, ALP prevents the updated policy from deviating too sharply from the inference policy and enlarges the policy family to cover inference-time mismatch noise. Hence, the flattened distribution can naturally tighten the gap between the updated and inference policies and reduce the tail of importance ratios, thus maintaining training stability. This is further validated empirically. Experiments on single-turn math and multi-turn tool-integrated reasoning tasks show that ALP not only improves final performance, but also avoids blow-up in the importance-ratio tail and KL spikes during iterative training, along with boosted exploration.
Ablations show that representation-level perturbations across all layers are most effective, substantially outperforming partial-layer and logits-only variants. Codes and training recipes are available at \url{https://github.com/amazon-science/adaptive-layerwise-perturbation}.
\end{abstract}


\begin{figure}[ht]
    \centering
    \begin{subfigure}[t]{0.43\textwidth}
        \centering
        \includegraphics[width=\textwidth]{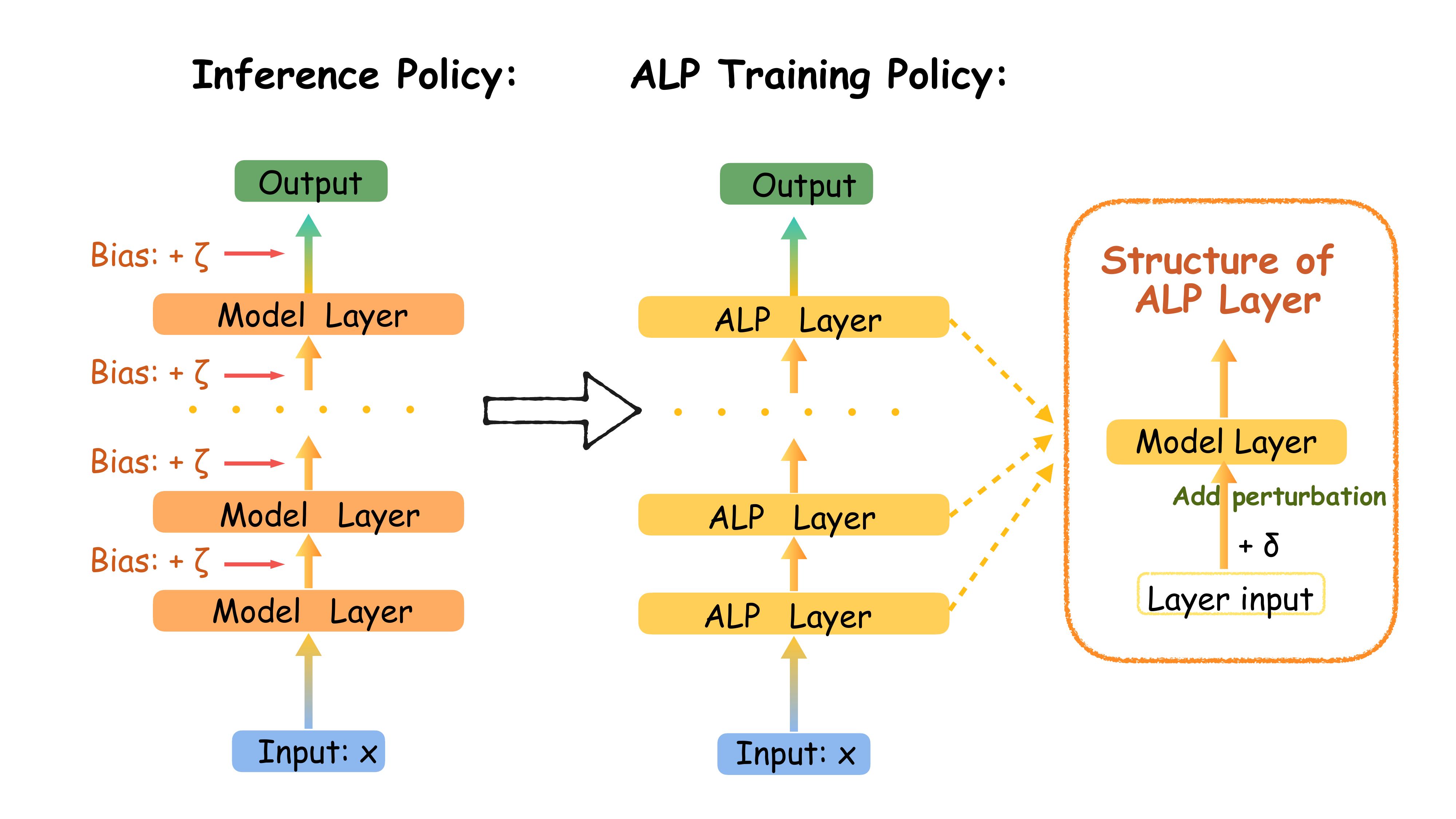}
    \end{subfigure}
        \hfill
    \begin{subfigure}[t]{0.55\textwidth}
        \centering
        \includegraphics[width=\textwidth]{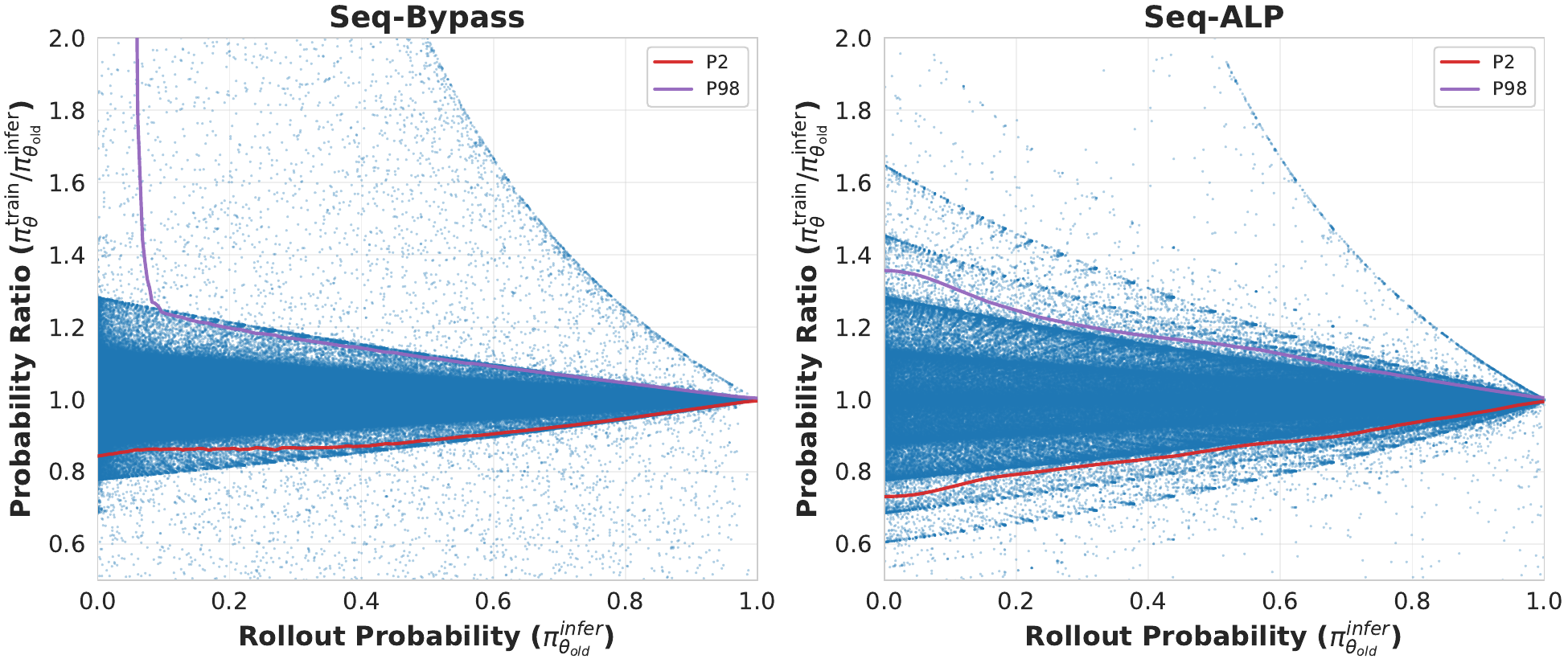}
    \end{subfigure}
    \caption{Left: ALP injects small layerwise perturbations during training to cover training-inference bias. Right: compared with Bypass over $1840$ steps, ALP yields a smoother policy and a tighter importance-ratio envelope, where lines P$2$, P$98$ represent the $2$th and $98$th quantile of the ratio.}
    \label{fig:alp}
\end{figure}

\section{Introduction}

The trade-off between training efficiency and stability is a central challenge in reinforcement learning (RL) for large language models (LLMs). To increase efficiency, practical systems often optimize a policy using rollouts generated by a different distribution, which is known as off-policy. Off-policy effects arise from multiple sources. First, the same batch of trajectories is used across several policy updates. Second, modern inference engines such as vLLM \citep{kwon2023efficient} and SGLang \citep{zheng2024sglang} introduce training–inference mismatch due to quantization, batching, and kernel-level differences, even when weights are nominally identical \citep{yao2025rollouttrainingmismatch}. In multi-turn agentic settings, this mismatch can be further amplified by inference distribution shifts induced by tool feedback and out-of-distribution observations \citep{li_rl_collapse}. These factors lead to heavy-tailed importance ratios, KL spikes, and brittle optimization dynamics, making robust off-policy RL an urgent problem.

A prominent line of work stabilizes training by modifying importance ratios. One approach changes the aggregation way of importance ratios in the objective. It replaces the ratio at each token index in GRPO
\citep{shao2024deepseekmath} with the multiplication of ratios in the whole trajectory \citep{zheng2025group}, but this does not consider training-inference mismatch. Another approach targets on training-inference mismatch. For example, Bypass is a baseline mentioned in \citep{li_rl_collapse,yao2025rollouttrainingmismatch} that uses the ratio of the updated and rollout policies. TIS/MIS \citep{yao2025rollouttrainingmismatch,li_rl_collapse} further correct mismatch by multiplying an auxiliary ratio of the proxy policy (an old policy acting as the anchor) and the rollout policy. Then, they truncate or mask tokens whose auxiliary ratio exceeds a threshold. Although effective at preventing catastrophic collapse, these methods introduce practical limitations: they split off-policy effects into two separately truncated ratios, which can over-truncate updates that are otherwise within a valid trust region, thereby inducing additional bias and leading to early plateau. We defer a detailed discussion of related work to Section~\ref{s:Related Works}.

We argue that off-policy instability in LLM RL is not solely a bookkeeping issue of ``which importance ratio to use'', but also a \textbf{geometry} issue: the noisy update may push the policy toward sharp, brittle regions where small distribution shifts (from staleness or inference mismatch) cause large changes in action probabilities. This motivates a perturbation-sampling perspective from model smoothness and distributional learning approaches \citep{shen2023engression,hao2025towards}: when the dominant failure mode is system-induced noise and staleness, a natural defense is \emph{fighting noise with structured noise}.

Specifically, we propose \textbf{Adaptive Layerwise Perturbation (ALP)} by injecting a learnable Gaussian perturbation \( \delta \sim \mathcal{N}(0, \sigma^2 I) \)\footnote{The perturbation distribution is not restricted to Gaussian; it only needs to have zero mean.} into the input hidden states across layers during policy updates and producing a perturbed training distribution \( \pi^{\mathrm{train}}_{\theta,\sigma} \) (Figure~\ref{fig:alp}, left). We optimize the objective using a single unified ratio of $\pi^{\mathrm{train}}_{\theta,\sigma}$ and the unperturbed inference policy;
i.e., perturbation is applied only to the numerator during training. 

Intuitively, ALP can \textbf{smooth} the local optimization landscape to suppress heavy-tailed ratio excursions and enlarge the training distribution family. This is also evidenced by the full-training comparison in Figure~\ref{fig:alp} (right). Starting from the same base model, the training policy without perturbation (Bypass) gradually becomes sharp and brittle, and the importance-ratio tail eventually explodes. In contrast, the training policy with perturbation (ALP) stays stable while controlling the mismatch between the training and inference systems, thus tightening the ratio-quantile envelope. Beyond stability, by enlarging effective support and preventing premature concentration on brittle modes, ALP also encourages \textbf{exploration}, particularly in multi-turn settings where compounding errors can reduce coverage.

Both our theoretical and empirical findings show that ALP significantly improves robustness and final performance. In summary, our contributions are three-fold:
\begin{enumerate}
    \item \textbf{General off-policy correction.} We propose ALP, which unifies staleness of training policies and training--inference mismatch into a single ratio of the updated and inference policies. ALP is simple and efficient to implement and avoids the over-truncation and multi-threshold tuning required by prior two-ratio approaches.
    \item \textbf{Theory: smoother geometry and bounded discrepancy.} We prove that with adaptive layerwise perturbation, KL divergence between the updated and inference distribution is bounded when \( \sigma^2 \) matches or exceeds the norm of the bias for the inference distribution from the training engine. This increases the probability that policy updates stay within a trust region and yields stable improvement. We further connect perturbation to loss smoothing, mitigating attraction to sharp, brittle optima.
    \item \textbf{Experiments: stability, performance, and ablations.} Across single-turn and multi-turn agentic reasoning tasks, ALP consistently outperforms MIS and Bypass by keeping more stable KL divergence and avoiding early plateau. We further show that ALP reduces off-policy mismatch and exhibits more stable optimization dynamics, consistent with improved exploration and robustness. Ablations further reveal that perturbing \emph{all layers} is most effective, and in partial settings, perturbations closer to lower layers tend to perform better than those restricted to upper layers.
\end{enumerate}

\subsection{Related Works}\label{s:Related Works}

\paragraph{Perturbation} Perturbation-based training is often motivated as a form of \emph{local averaging}: optimizing the expected objective under small input/latent disturbances can reduce sensitivity to sharp, brittle regions of the loss landscape and improve robustness under distribution shift. This perspective has been used across a range of learning settings to stabilize optimization and enhance generalization. A line of work has focused on enhancing model smoothness via perturbation-based sampling across diverse settings. For instance, \citet{moreno2018forward, yu2023noisynn} inject Gaussian noise to augment the training data and improve generalization. Certified robustness methods \citep{cohen2019rs, salman2019provably, lecuyer2019certified, yang2020randomized} pursue stability against adversarial attacks by introducing perturbations drawn from specific distributions during training. Beyond robustness, \citet{li2022positive, pereira2021multi} demonstrate that perturbation injection can also improve performance across a range of tasks. 
Another related line of research focuses on diffusion models \citep{ho2020denoising, song2020denoising, dhariwal2021diffusion, saharia2022palette, rombach2022high}, which model sample distributions as convolutions of Gaussian distributions and introduce Gaussian noise to the data during training.
More recently, \citet{shen2023engression, hao2025towards} propose learnable perturbation mechanisms that adaptively enhance model performance. Related ideas also appear in diffusion models, where additional perturbations can mitigate train--test mismatch in iterative generation: \citet{ning2023elucidating} reduce exposure bias by perturbing training inputs to better match the distribution encountered during sampling, while \citet{li2023alleviating} alleviate exposure bias via sampling with shifted time steps.

\section{\alg}

\subsection{Prior Approaches}\label{sec:prior_approaches}
For prompts $x\sim d_0$, the response $a$ is generated from the rollout policy $\pi_{\theta_\old}^\infer(\cdot|x)$, where $\theta_\old$ is the parameter from the last step. We then update $\theta$ using training probability $\pi_\theta^\train$. Then, the standard GRPO
\begin{equation}
\begin{aligned}
    \mathcal{J}_{\mathrm{GRPO}}(\theta) =& \mathbb{E}_{x \sim d_0, \{a_i\}_{i=1}^n\sim\pi_{\theta_{\old}}^{\infer}(\cdot|x)} \bigg[ \frac{1}{n} \sum_{i=1}^{n}\sum_{t=1}^{|a_i|}\min \Big\{ \rho_{i,t}(\theta) A_i, \text{clip} \left( \rho_{i,t}(\theta), 1-\epsilon, 1+\epsilon \right) A_i \Big\} \bigg],
\end{aligned}
\end{equation}
where $|a_i|$ denotes the length of $a_i$, $A_i$ is the group advantage, and $\rho_{i,t}(\theta)$ is the token-level importance ratio at the $t$-th token:
$
\rho_{i,t}(\theta)= \pi^{\train}_{\theta}(a_{i,t}|x, a_{i,<t})/\pi^{\train}_{\theta_\old}(a_{i,t}|x, a_{i,<t}).
$

\paragraph{Sequence-level objective.}
However, this token-level objective is biased when the difference between $\pi^{\train}_{\theta}$ and $\pi^{\train}_{\theta_\old}$ cannot be neglected. We can use the unbiased sequence level objective \citep{zheng2025group}, 
\[
\rho_i(\theta)=\frac{\pi^{\train}_{\theta}(a_i|x)}{\pi^{\train}_{\theta_\old}(a_i|x)} = \prod_{t=1}^{|a_i|} \frac{\pi^{\train}_{\theta}(a_{i,t}|x, a_{i,<t})}{\pi^{\train}_{\theta_\old}(a_{i,t}|x, a_{i,<t})}.
\]
To balance the bias and variance, Group Sequence Policy Optimization (GSPO) \citep{zheng2025group} proposes using the geometric mean:
$
(\pi^{\train}_{\theta}(a_i|x)/\pi^{\train}_{\theta_\old}(a_i|x))^{1/|a_i|}.
$

\paragraph{Masked importance sampling (MIS).} Since $\pi_{\theta_{\old}}^{\infer}$ and $\pi^{\train}_{\theta_\old}$ have a mismatch, to make algorithms robust to this mismatch, a standard approach is to multiply by a correcting ratio \citep{yao2025rollouttrainingmismatch,liu2025flashrl}:
\begin{equation*}
\begin{aligned}
    \mathcal{J}_{\mathrm{MIS}}(\theta) =& \mathbb{E}_{x \sim d_0, \{a_i\}_{i=1}^n\sim\pi_{\theta_{\old}}^{\infer}(\cdot|x)} \bigg[ \frac{1}{n} \sum_{i=1}^{n}\sum_{t=1}^{|a_i|}\mathrm{Mask}\left\{\rho'_{i,t}, C\right\} \cdot \min \Big\{ \rho_{i,t}(\theta) A_i, \text{clip} \left( \rho_{i,t}(\theta), 1-\epsilon, 1+\epsilon \right) A_i \Big\} \bigg],
\end{aligned}
\end{equation*}
where $C$ is a threshold parameter, the gradient is masked when $\rho'_{i,t}>C$, and the additional importance ratio 
$
\rho'_{i,t} = \pi^{\train}_{\theta_\old}(a_{i,t}|x, a_{i,<t})/\pi^{\infer}_{\theta_\old}(a_{i,t}|x, a_{i,<t}).
$
This ratio can also be calculated in sequence level $\rho'_{i}=\prod_t\rho'_{i,t}$.

\paragraph{Bypass fails to stabilize training.}
A direct way to unify the importance ratio is to replace $\rho_i(\theta)$ with 
$
\pi^{\train}_{\theta}(a_{i,t}|x, a_{i,<t})/\pi^{\infer}_{\theta_\old}(a_{i,t}|x, a_{i,<t}).
$
This approach is a baseline in training--inference mismatch work \citep{yao2025rollouttrainingmismatch,li_rl_collapse}, but it does not work effectively. Specifically, since the family class of $\pi_{\theta_\old}^\infer$ has system bias relative to the training policy $\pi_{\theta_\old}$, the policy gradients contain noise and bias, making the policy more unstable and brittle. Empirically, Bypass exhibits blow-up in the ratio tail during full training in Figure~\ref{fig:alp} (right). Even with the same checkpoint and the same rollouts, the (log-)ratio envelope can flare up in the low-probability tail without additional mechanisms to control mismatch (Fig.~\ref{fig:smooth}, middle).

\subsection{Our Approach}
For layer $h\in[H]$ in the model, suppose the dimension of its input embedding is $d_h$. We add a Gaussian
perturbation variable $\delta^h\sim\mathcal{N}(0, \sigma_h^2 I_{d_h})$ to the updated training policy. Denoting the perturbation variable $\delta=[\delta^1,\ldots,\delta^H]^T\sim\mathcal{N}(0, \mathrm{diag}(\sigma^2))$, where $\sigma$ is the std vector and $d=\sum_{h=1}^Hd_h$, we define the perturbed policy
$
\pi_{\theta,\sigma}(a|x,\delta) = \prod_{t=1}^{|a|}\pi_{\theta,\sigma}(a_t|x, a_{<t},\delta_t),
$
where $\delta_t$ matches the hidden-state tensor shape and is sampled independently across token positions.
Then, the loss function is
\begin{equation}
\footnotesize
\begin{aligned}\label{eq:token-alp}
    \mathcal{J}_{\mathrm{ALP}}(\theta,\sigma) =& \mathbb{E}_{x \sim d_0, \{a_i\}_{i=1}^n\sim\pi_{\theta_{\old}}^{\infer}(\cdot|x), \delta_i\sim\mathcal{N}(0, \mathrm{diag}(\sigma^2))} \bigg[ \frac{1}{n} \sum_{i=1}^{n}\sum_{t=1}^{|a_i|}\min \Big\{ \rho^{\mathrm{ALP}}_{i,t}(\theta) A_i, \text{clip} \left( \rho^{\mathrm{ALP}}_{i,t}(\theta), 1-\epsilon_l, 1+\epsilon_h \right) A_i \Big\} \bigg],
\end{aligned}
\end{equation}
where the token-level importance ratio
\begin{equation*}
\begin{aligned}
    \rho^{\mathrm{ALP}}_{i,t} = \frac{\pi_{\theta,\sigma}(a_{i,t}|x,a_{i,<t},\delta_{i,t})}{\pi^{\infer}_{\theta_\old}(a_{i,t}|x,a_{i,<t})},
\end{aligned}
\end{equation*}
and the sequence-level importance ratio
\begin{equation}\label{eq:seq-alp}
\rho^{\mathrm{ALP}}_{i} = \prod_{t=1}^{|a_i|} \rho^{\mathrm{ALP}}_{i,t} = \frac{\pi_{\theta,\sigma}(a_i|x,\delta_i)}{\pi^{\infer}_{\theta_\old}(a_i|x)}
\end{equation}
can also be applied to the loss correspondingly. Note that only the training policy is perturbed not the inference one.

\subsection{Analysis: Robustness of ALP}
\begin{figure}[t]
    \centering
    \begin{subfigure}[t]{0.38\textwidth}
        \centering
        \includegraphics[width=\textwidth]{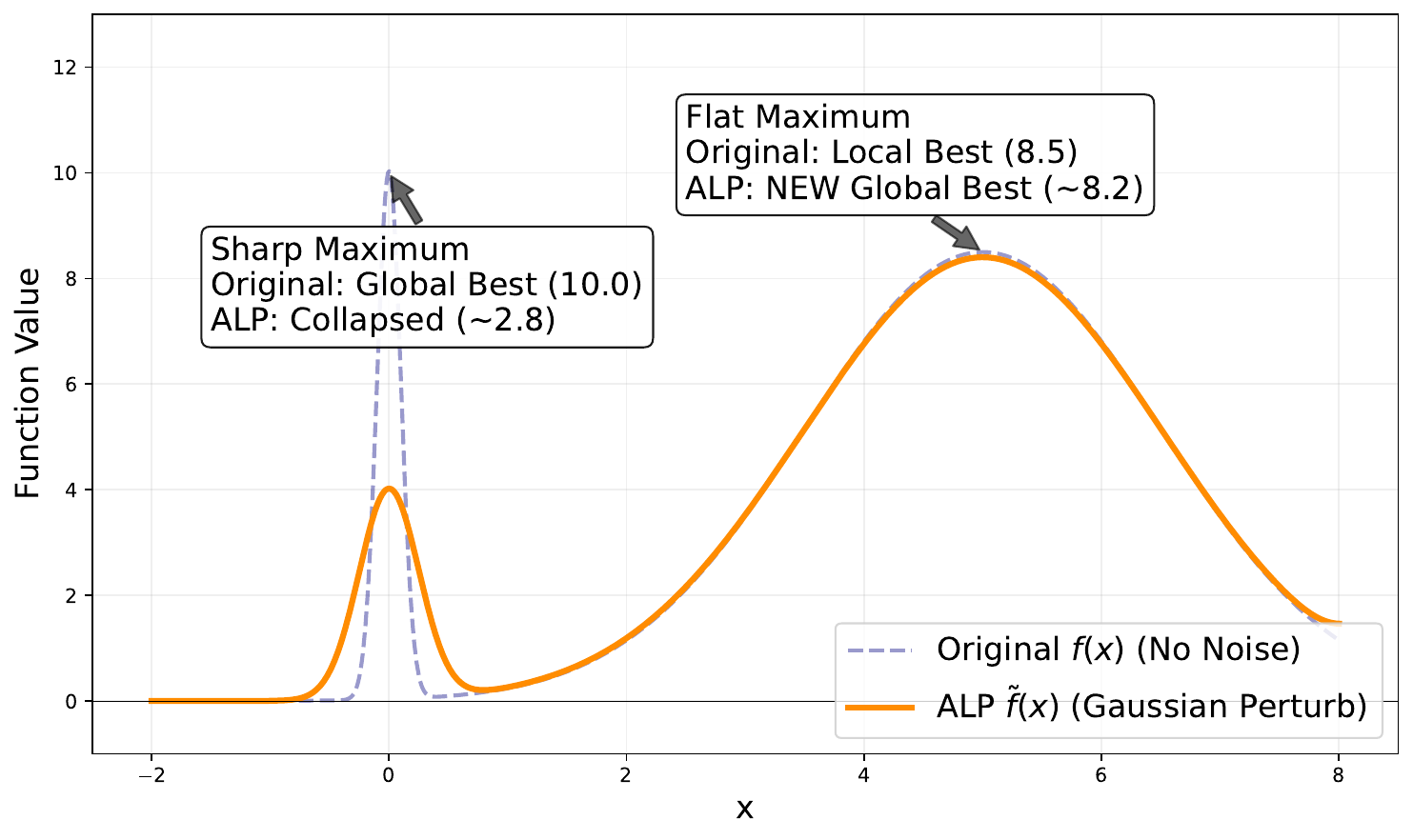}
    \end{subfigure}
    \hfill
    \begin{subfigure}[t]{0.3\textwidth}
        \centering
        \includegraphics[width=\textwidth]{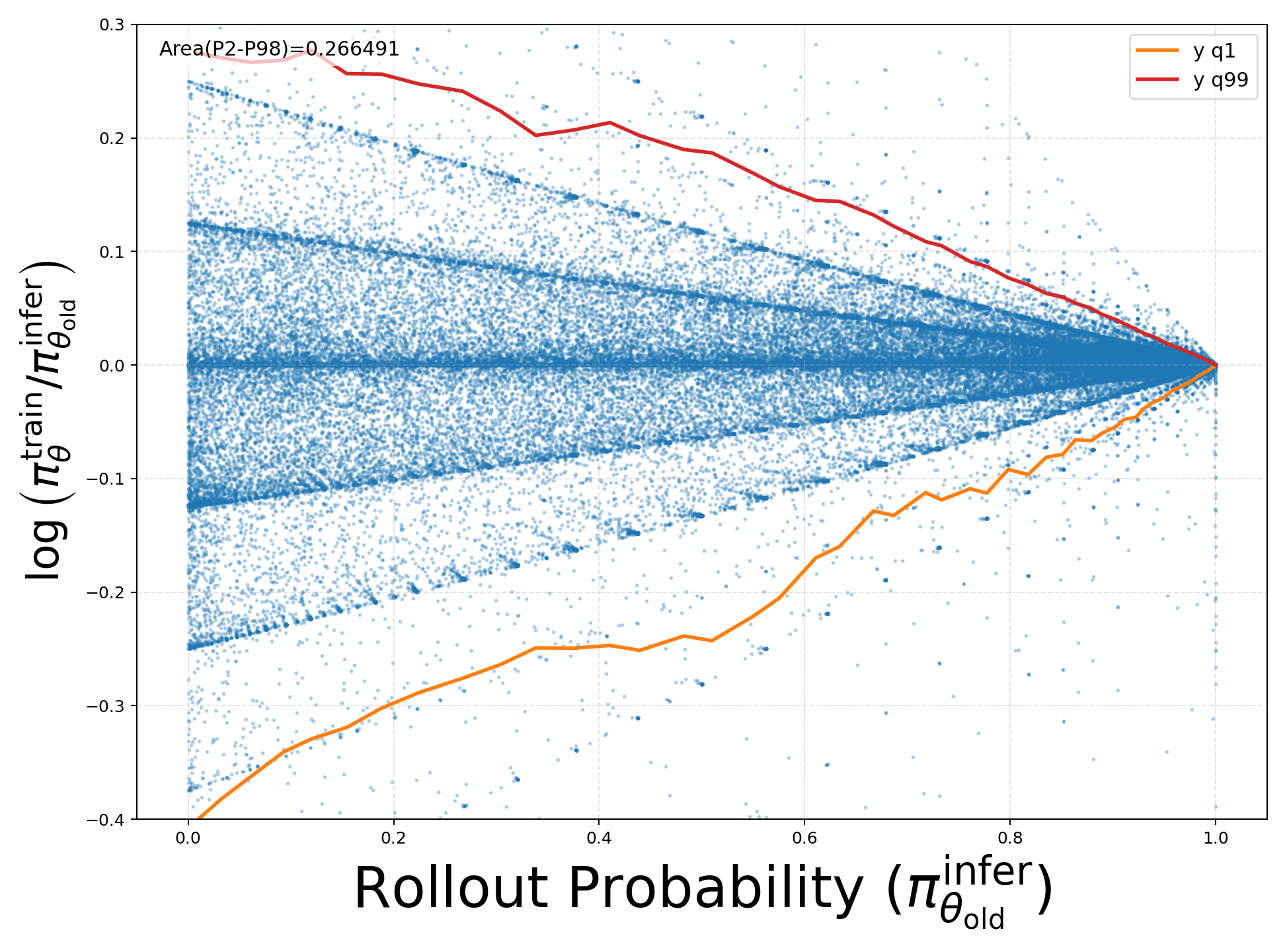}
    \end{subfigure}
    \hfill
    \begin{subfigure}[t]{0.3\textwidth}
        \centering
        \includegraphics[width=\textwidth]{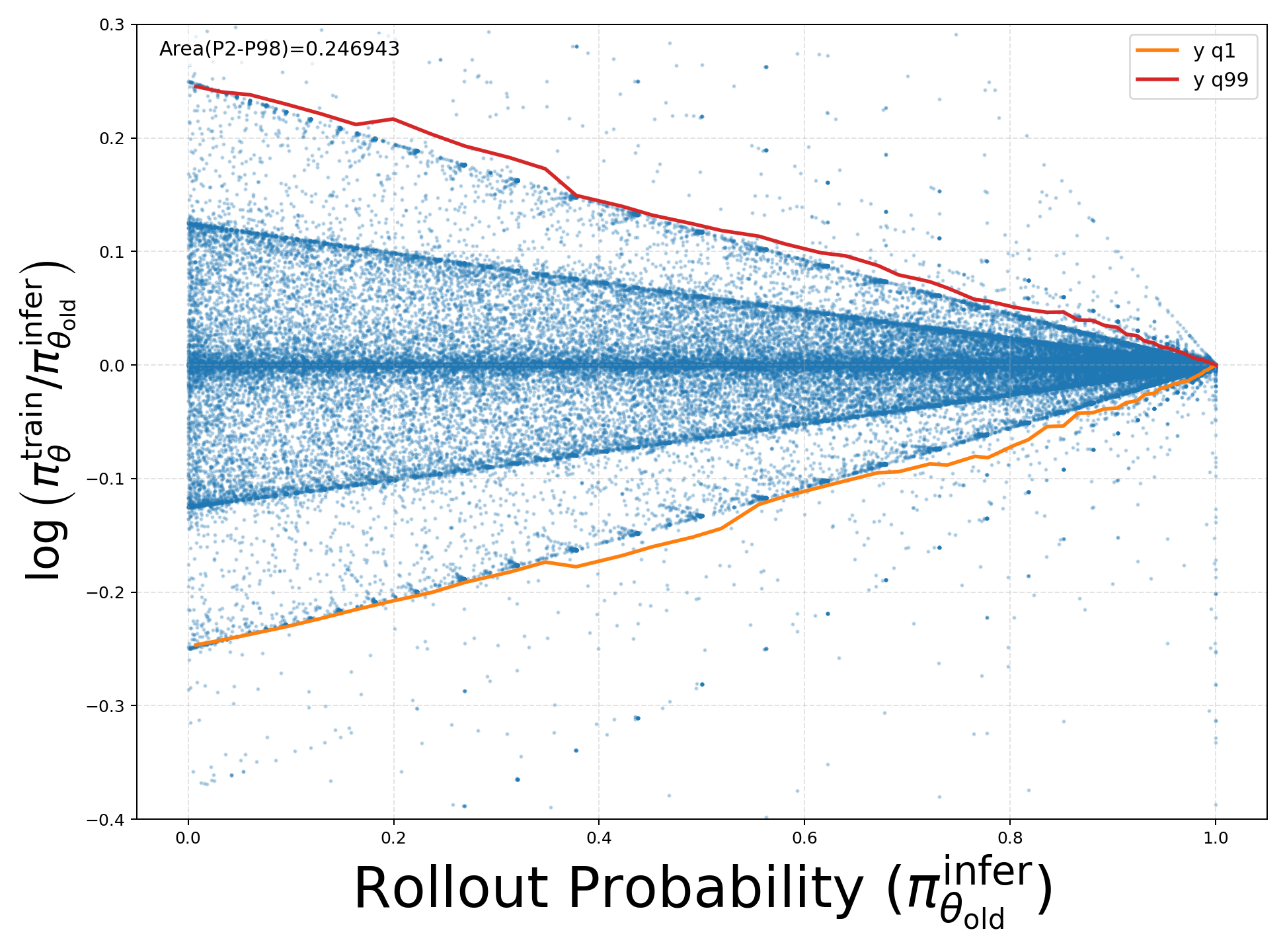}
    \end{subfigure}
    \hfill
    \caption{Effects of perturbation. Left: perturbation smooths a sharp objective into a flatter surrogate. Middle/right: in a controlled multi-turn comparison from the same checkpoint and rollout batch, perturbation shrinks the mismatch envelope, especially in the low-probability tail.}
    \label{fig:smooth}
\end{figure}

ALP helps in two ways: it reduces training-inference mismatch and smooths the local optimization geometry. For clarity, we analyze a one-layer perturbed policy
\[
    \tilde{\pi}_\theta(a|x) = \E_\delta \pi_\theta(a|x+\delta),
\]
where the expectation is used only for analysis; the algorithm uses a single-sample approximation in practice. Formal statements and proofs are deferred to Appendix~\ref{pf:mismatch} and Appendix~\ref{pf:smooth}.

\paragraph{Controlling training-inference mismatch.}
We model the inference-time policy as the training-time policy with small noises on the input of each layer:
$
    \pi_{\theta_\old}^\infer(a|x) \approx \pi_{\theta_\old}(a|x+\zeta),
$
where $\zeta$ is a zero-mean system-induced bias. Without perturbation, even a small $\zeta$ can be amplified by a sharp local policy geometry, leading to unstable importance ratios. ALP mitigates this effect by enlarging the effective policy family around the training distribution and covering mismatch noise with perturbation.
\begin{theorem}[Informal]\label{thm:mismatch_informal}
When perturbation is not too large to distort the original distribution, we have
\begingroup\small
\begin{equation}\label{eq:kl_mismatch}
\begin{aligned}
   \E_{x \sim d_0,\zeta} \mathrm{KL} \left( \tilde{\pi}_{\theta_\old} (a |x) \parallel \pi_{\theta_\old}^\infer(a|x)  \right) \le  \mathcal{O}\left(\frac{d \E \| \zeta \|_2^2}{2 \sigma^2}\right),
\end{aligned}
\end{equation}
\endgroup
where $\mathcal{O}(\cdot)$ hides absolute constants and lower-order terms.
\end{theorem}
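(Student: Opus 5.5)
The plan is to write both distributions as mixtures of one base family and use joint convexity of KL. With $\pi^{\infer}_{\theta_\old}(a|x)=\pi_{\theta_\old}(a|x+\zeta)$, set
\[
p_\theta(a|x+u)=\E_{\delta}\,\pi_\theta(a|x+u+\delta),
\]
so that $\tilde{\pi}_{\theta_\old}(a|x)=p_{\theta_\old}(a|x)$. The two objects to compare are then the Gaussian-smoothed policy centered at $x$ and the point evaluation at $x+\zeta$. The idea is to view the Gaussian convolution as a ``channel'' and invoke the data-processing inequality. Concretely, $\tilde{\pi}_{\theta_\old}(\cdot|x)$ is the pushforward of $\mathcal{N}(x,\sigma^2 I)$ through the Markov kernel $z\mapsto \pi_{\theta_\old}(\cdot|z)$. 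The inference policy alone is not a mixture, so I compare it against the same kernel applied to $\mathcal{N}(x+\zeta,\sigma^2 I)$.

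\textbf{Step 1 (data processing).} Joint convexity of KL in the mixing measure gives
\[
\mathrm{KL}\bigl(p_{\theta_\old}(\cdot|x)\,\|\,p_{\theta_\old}(\cdot|x+\zeta)\bigr)\le \mathrm{KL}\bigl(\mathcal{N}(x,\sigma^2 I)\,\|\,\mathcal{N}(x+\zeta,\sigma^2 I)\bigr)=\frac{\|\zeta\|_2^2}{2\sigma^2}.
\]
With per-layer variances summed over $H$ layers, this becomes $\sum_h \|\zeta^h\|^2/(2\sigma_h^2)$. If $\zeta$ is modeled coordinatewise with a scalar budget, bounding crudely yields the stated factor $d$. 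I would present the factor $d$ as arising from the per-coordinate normalization, for instance $\E\|\zeta\|^2$ measured per coordinate, so that it matches the form $d\,\E\|\zeta\|_2^2/(2\sigma^2)$.

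\textbf{Step 2 (replace smoothed inference by actual inference).} I still need to pass from $p_{\theta_\old}(\cdot|x+\zeta)$ to $\pi_{\theta_\old}(\cdot|x+\zeta)$. This is where the hypothesis ``perturbation not too large to distort the original distribution'' enters. I would impose a condition of the form
\[
\frac{p_{\theta_\old}(a|x+\zeta)}{\pi_{\theta_\old}(a|x+\zeta)}\le 1+\eta(\sigma),
\qquad\text{or}\qquad
\pi_{\theta_\old}(a|z)\ge c>0 \text{ uniformly, with } \log\pi \text{ being } L\text{-smooth in } z.
\]
Then decompose
\[
\mathrm{KL}(P\|R)=\mathrm{KL}(P\|Q)+\E_P\log(Q/R),
\]
where $P=\tilde\pi_{\theta_\old}$, $Q=p_{\theta_\old}(\cdot|x+\zeta)$, and $R=\pi^{\infer}_{\theta_\old}$. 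The second term is at most $\log(1+\eta)\le\eta$. Under smoothness, a second-order Taylor expansion gives $\eta=O(L\sigma^2 d)$, because the first-order term vanishes since $\delta$ has zero mean. This term is lower order under the stated regime and is hidden in $\mathcal{O}(\cdot)$.

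\textbf{Step 3 (average).} Take the expectation over $x\sim d_0$ and $\zeta$ to conclude.

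I expect Step 2 to be the main obstacle. A KL bound against a non-smoothed distribution needs a lower bound on inference probabilities or a density-ratio control; otherwise the low-probability tail can make $\E_P\log(Q/R)$ unbounded. I would state this explicitly as the regularity assumption packaged in ``not too large to distort''. I would first try the bounded log-ratio version, since it requires no differentiability.
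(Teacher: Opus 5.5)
Your proof is correct, and your Step~2 is the same as the paper's first move. The paper applies Condition~\ref{cond:distribution}, $\alpha\,\tilde\pi_\theta(a|x)\le\pi_\theta(a|x)$, at the point $x+\zeta$ to swap $\pi^{\infer}_{\theta_\old}$ for the smoothed policy at a cost of $-\ln\alpha$. That is your bounded-ratio condition with $1+\eta=1/\alpha$.

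The routes diverge on the term $\mathrm{KL}\bigl(\tilde\pi_{\theta_\old}(\cdot|x)\,\|\,\tilde\pi_{\theta_\old}(\cdot|x+\zeta)\bigr)$. The paper proceeds in four steps:
\begin{enumerate}
\item It expands this KL to second order, an ``$\approx$'' step that is valid only for small $\zeta$.
\item It bounds $\zeta^\top F\zeta$ by $\|\zeta\|_2^2\,\E\|\nabla_x\ln\tilde\pi_{\theta_\old}\|_2^2$, where $F$ is the input Fisher information.
\item It rewrites the score via Stein's identity (Lemma~\ref{lem:kernel}) as $\sigma^{-2}\E_{q}[\delta]$.
\item It invokes Condition~\ref{cond:bias}, $\|\E_q\delta\|_2^2\le Cd\sigma^2$.
\end{enumerate}
The factor $d$ enters through this trace-type bound.

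Your data-processing argument through the Gaussian channel instead gives $\|\zeta\|_2^2/(2\sigma^2)$ exactly, for every $\zeta$. It needs no Taylor approximation and no Condition~\ref{cond:bias}. It is a rigorous, dimension-free version of the same estimate; its infinitesimal form is $F\preceq I/\sigma^2$. It implies the stated bound simply because $d\ge 1$. So do not try to manufacture the $d$ by reading $\E\|\zeta\|_2^2$ per coordinate: that would change the statement, and it is unnecessary.

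What the paper's route offers in return is a policy-dependent constant. If the policy is insensitive to its input, the posterior mean of $\delta$ is small. The paper's (approximate) bound with $Cd<1$ can then be tighter than your worst-case channel bound.

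A minor point on your smoothness variant of Step~2. Since $\nabla^2\pi/\pi=\nabla^2\log\pi+\nabla\log\pi\,\nabla\log\pi^\top$, the second-order term also carries $\sigma^2\|\nabla\log\pi\|_2^2$, so $\eta$ is not just $O(L\sigma^2 d)$. This is harmless, because $\pi\ge c$ together with $L$-smoothness bounds the gradient, but your bounded-ratio version avoids the issue entirely.
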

The formal one is deferred to Theorem \ref{thm:mismatch}.
The theorem shows the key trade-off: $\sigma$ should be large enough to cover mismatch noise but small enough to preserve the original policy. In turn, the updated policy is more likely to stay close to the rollout distribution and remain within a trust region \citep{schulman2015trust}.

\paragraph{Improvement on Loss Smoothness.}
Beyond mismatch correction, perturbation acts as local averaging: instead of optimizing a sharp objective at a single representation, ALP optimizes an averaged neighborhood around it. This suppresses brittle local curvature and reduces sensitivity to spiky optima.
\begin{theorem}[Informal]\label{thm:smooth_informal}
Under mild regularity conditions, perturbation reduces the effective curvature of the surrogate objective, yielding a smoother optimization landscape than the non-perturbed objective.
\end{theorem}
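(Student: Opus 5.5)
The plan is to make the informal statement precise through a Gaussian-smoothing argument on the surrogate objective. Write $L(x)$ for the (per-representation) surrogate loss as a function of the input hidden state $x$, and set $\tilde L(x)=\E_{\delta\sim\mathcal N(0,\sigma^2 I_d)} L(x+\delta)$. This matches the one-layer perturbed policy $\tilde\pi_\theta(a|x)=\E_\delta\pi_\theta(a|x+\delta)$ used for Theorem~\ref{thm:mismatch_informal}. The hypothesis I would take as the ``mild regularity condition'' is that $L$ is measurable with bounded magnitude, $|L|\le B$, or more generally grows at most polynomially. I would also assume, when comparing against the unperturbed objective, that $L$ is twice differentiable with Hessian bounded in operator norm by $\beta$, where $\beta$ may be very large at sharp optima.

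\textbf{Step 1: smoothness independent of sharpness.} Use Stein's identity to move derivatives onto the Gaussian density:
\[
\nabla\tilde L(x)=\frac{1}{\sigma^2}\E_\delta\big[L(x+\delta)\,\delta\big],\qquad
\nabla^2\tilde L(x)=\frac{1}{\sigma^4}\E_\delta\big[L(x+\delta)(\delta\delta^\top-\sigma^2 I)\big].
\]
Bounding $|L|\le B$ and evaluating $\E|u^\top\delta|$ and $\E|(u^\top\delta)^2-\sigma^2|$ for a unit vector $u$ gives
\[
\|\nabla\tilde L\|\le \frac{B}{\sigma}\sqrt{2/\pi},\qquad \|\nabla^2\tilde L\|_{op}\le \frac{cB}{\sigma^2}.
\]
Here $c$ is an absolute constant, and one can take $c=2$. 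The key point is that these bounds depend on $\sigma$ but not on $\beta$.

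\textbf{Step 2: never sharper than the original.} If $L$ is already $\beta$-smooth, differentiating under the expectation gives $\nabla^2\tilde L(x)=\E_\delta\nabla^2 L(x+\delta)$. Hence
\[
\|\nabla^2\tilde L\|_{op}\le\min\{\beta,\;cB/\sigma^2\}.
\]
This is the formal version of ``reduces the effective curvature'': perturbation never increases curvature, and it strictly caps curvature whenever $\beta>cB/\sigma^2$, which is exactly the sharp-optimum regime. I would add a one-dimensional example to show the gap is real. For a spike $L(x)=-Be^{-x^2/(2s^2)}$ with $s\ll\sigma$, the raw curvature is $B/s^2$, while the smoothed curvature is $O(Bs/\sigma^3)$. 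The smoothed function is again a Gaussian bump, now of width $\sqrt{s^2+\sigma^2}$ and height scaled by $s/\sqrt{s^2+\sigma^2}$.

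\textbf{Step 3: the multi-layer case.} Perturbations enter each layer's input, so I would condition on all but one layer's noise and apply Steps 1--2 to that layer's coordinates, giving the block bound $cB/\sigma_h^2$. The cross blocks require more care; I would handle them using independence of the $\delta^h$ together with the joint Stein identity on the concatenated $\delta$.

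\textbf{Main obstacle.} The hardest part is relating smoothness in the hidden state $x$ to smoothness in the parameters $\theta$, which is what the optimizer actually sees. I would state the result in representation space and then transfer it through the chain rule, assuming a Lipschitz Jacobian bound $\|\partial x/\partial\theta\|\le J$. I expect the leftover term involving the curvature of $x(\theta)$ to be the delicate one; I would simply assume it is bounded and absorb it into the constant.
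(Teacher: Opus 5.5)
Your Stein computations in Steps 1--2 are correct, including the bump example; your constant $c=2$ is valid since $\E|Z^2-1|\approx 0.97$ for $Z\sim\mathcal N(0,1)$. The problem is that they control the Hessian in the \emph{perturbed variable} $x$. The curvature of the optimization landscape is the Hessian in $\theta$, and that is what the paper formalizes in Theorem~\ref{thm:smooth} as $\mathcal{I}(x,\theta)=\|\nabla^2_\theta J\|_2$. The step you defer as the ``main obstacle'' is where the argument fails, not a technicality. Stein's identity only moves $x$-derivatives onto the Gaussian density. The $\theta$-derivatives commute with $\E_\delta$, so smoothing yields only $\nabla^2_\theta\tilde J(x,\theta)=\E_\delta\nabla^2_\theta J(x+\delta,\theta)$. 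That is an average of raw $\theta$-curvatures at nearby inputs, with no $\sigma^{-2}$ cap.

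Your chain-rule transfer through $\partial x/\partial\theta$ does not repair this, because it only captures $\theta$-dependence routed through the perturbed hidden state. In the one-layer model $\tilde\pi_\theta(a|x)=\E_\delta\pi_\theta(a|x+\delta)$, the perturbed input does not depend on $\theta$ at all. In the layerwise model, the parameters of layer $h$ and all later layers enter the loss as functions of $x^h+\delta^h$, not through $x^h$. So the $\theta$-curvature sits in the direct term $\E_\delta\nabla^2_\theta L(x+\delta,\theta)$, which you never bound. Assuming the leftover term is bounded and absorbing it into a constant is assuming the conclusion.

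Concretely, $L(x,\theta)=B\cos(K\theta)$ satisfies $|L|\le B$ and your $x$-Hessian bound. Yet its $\theta$-curvature $BK^2$ is unchanged by perturbation for every $\sigma$. So boundedness cannot be the ``mild regularity condition''; some localization of sharpness in input space is necessary.

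That localization is the missing idea, and with it the proof needs neither Stein's identity nor the chain rule. Let $b=\sup_x\mathcal I(x,\theta)$, and assume $S=\{x:\mathcal I(x,\theta)\ge cb/2\}$ has finite Lebesgue measure, as the paper does. By convexity of the norm, $\tilde{\mathcal I}(x,\theta)\le\E_\delta\mathcal I(x+\delta,\theta)\le b\,\mathbb P(\delta\in S-x)+cb/2$. The isotropic Gaussian density is radially decreasing, so $\mathbb P(\delta\in S-x)$ is at most the mass of the centered ball of volume $|S|$. That mass is at most $\epsilon$ once $\sigma$ is large enough, uniformly in $x$. Taking $\epsilon\le c/2$ gives $\sup_x\tilde{\mathcal I}\le c\,\sup_x\mathcal I$.

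The curvature reduction therefore comes from diluting rare high-curvature inputs, not from a $B/\sigma^2$ bound. Your Step 1 is essentially the mechanism of Lemma~\ref{lem:kernel}, which the paper uses for the mismatch bound in Theorem~\ref{thm:mismatch}. There, smoothness in $x$ is exactly what is needed.
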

We provide the formal theorem in Theorem \ref{thm:smooth}.
This result formalizes the intuition that ALP discourages optimization from collapsing onto sharp, fragile regions. Figure~\ref{fig:smooth} provides empirical support: the toy example illustrates smoothing, while the controlled comparison shows a visibly tighter log-ratio envelope after applying perturbation. Additionally, in a controlled one-step intervention from the same checkpoint (Figure~\ref{fig:smooth}, middle/right), adding perturbation substantially narrows the conditional quantile envelope of the log-ratio,
especially for low-probability tokens that dominate tail risk. This indicates that ALP also suppresses extreme training-inference deviations in a single update step, increasing the likelihood that updates remain within a trust region around the rollout distribution \citep{schulman2015trust}.

\section{Experiments}

\subsection{Single-turn Reasoning}\label{sec:single_turn_setting}

\subsubsection{Experimental settings}

\textbf{Dataset.} For \emph{single-turn} reasoning, we study math tasks without tool use. The training set merges the math subset of Guru RL-92k\footnote{\url{https://huggingface.co/datasets/LLM360/guru-RL-92k}}\citep{cheng2025revisiting} and a $75$K OpenR1\footnote{\url{https://huggingface.co/datasets/weqweasdas/from_default_filtered_openr1_with_scores}}\citep{openr1} subset, with correctness verified by \textbf{Math-Verify} \citep{Kydlicek_Math-Verify_Math_Verification}. Additional preprocessing details are deferred to Appendix~\ref{sec:Additional Experimental Details}.

\textbf{Training.}
We implement all methods in verl \citep{sheng2024hybridflow}, following its default setup where possible. We use AdamW with learning rate $1\times 10^{-6}$ and the clip-higher trick \citep{yu2025dapoopensourcellmreinforcement} with asymmetric clipping $(1-\epsilon_l, 1+\epsilon_h)$. The models are trained from Qwen2.5-Math-1.5B-base \citep{yang2024qwen2}; each iteration samples $512$ prompts, rolls out $n=8$ responses per prompt at temperature $1.0$, and performs $16$ policy updates. As provided in Table~\ref{t:Settings of parameters of importance ratios} in the appendix, token/seq denotes whether the ratio is aggregated at the token level or sequence level. The definitions of these notations are provided in Sec.~\ref{sec:prior_approaches}. Unless otherwise specified, the main single-turn experiments use Qwen2.5-Math-1.5B-base; we additionally report a validation run on Qwen3-4B \citep{yang2025qwen3} to test whether observed trend transfers to a different backbone.

\textbf{Benchmarks.}
We evaluate on Math500 \citep{hendrycks2021measuring}, Minerva Math \citep{lewkowycz2022solving}, Olympiad Bench \citep{he2024olympiadbench}, AIME2024\footnote{https://huggingface.co/datasets/math-ai/aime24}, and AIME2025\footnote{https://huggingface.co/datasets/math-ai/aime25}. We report average@32 at temperature $1.0$ with a generation limit of $4096$ tokens. Baselines include Seq-Bypass, Seq-MIS, Token-MIS, and vanilla GRPO; Table~\ref{t:Settings of parameters of importance ratios} summarizes how each method forms and clips the importance ratios.

\subsubsection{Main Results}
\begin{table}[ht]
\small
\centering
\setlength{\tabcolsep}{6pt}
\begin{tabular}{lcccccc}
\toprule
Method & Math500 & Minerva Math & Olympiad Bench & AIME24 & AIME25 & Average \\
\midrule
GRPO       & 75.91 & 36.43 & 38.82 & 16.77 & 10.94 & 35.77 \\
Seq-Bypass & 76.21 & 35.23 & 38.52 & 16.35 & 7.81 & 34.82 \\
Seq-MIS    & 77.08 & 36.75 & 39.06 & 15.21 & 9.58 & 35.54 \\
Token-MIS  & 77.84 & 35.94 & 40.06 & 17.40 & 10.83 & 36.41 \\
\hline
\textbf{Token-ALP}  & \textbf{78.10} & \textbf{37.27} & \textbf{40.77} & \textbf{21.46} & 11.77
& \textbf{37.87} \\
Seq-ALP    & 77.84 & 37.06 & 40.28 & 16.98 & \textbf{11.98} & 36.83 \\
\bottomrule
\end{tabular}
\caption{Single-turn evaluation (average@32, temperature $1.0$) across five math benchmarks. Token-ALP attains the highest average score.}
\label{tab:single_turn_results}
\end{table}

\begin{figure}[ht]
    \centering
    \includegraphics[width=0.95\linewidth]{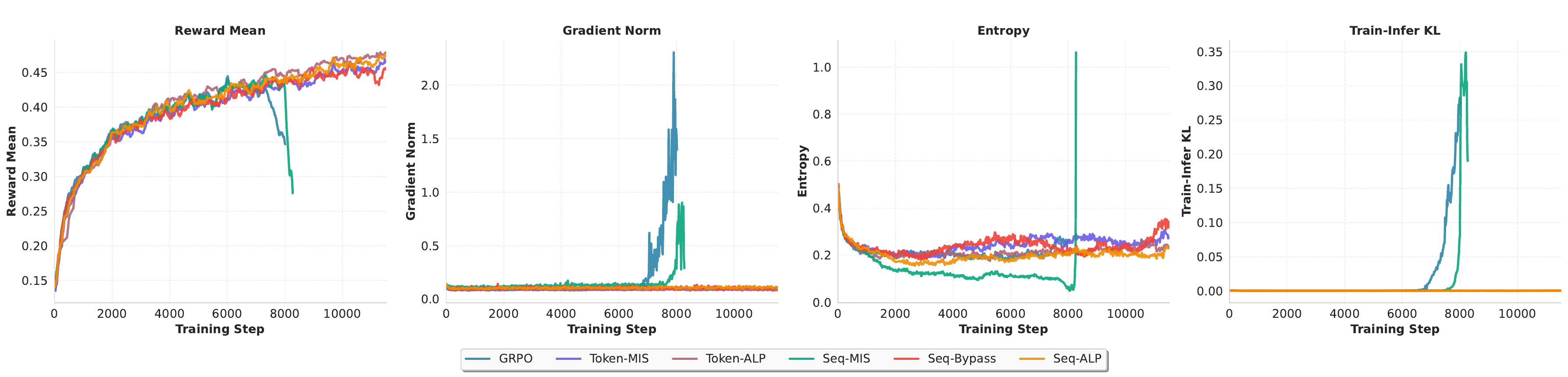}
    \caption{Single-turn training dynamics under training-inference mismatch: reward, gradient norm, entropy, and rollout-training KL over policy-update steps. Reward is smoothed with a 10-step moving average.}
    \label{fig:single_turn_comparison}
\end{figure}

\begin{table}[ht]
\small
\centering
\setlength{\tabcolsep}{6pt}
\begin{tabular}{lcccccc}
\toprule
Method & Math500 & Minerva Math & Olympiad Bench & AIME24 & AIME25 & Average \\
\midrule
GRPO       & 94.49 & 56.96 & 68.28 & 60.29 & 51.17 & 66.24 \\
Token-MIS  & 94.28 & 53.66 & 66.36 & 62.35 & 50.00 & 65.33 \\
Seq-Bypass & 93.70 & 52.00 & 64.21 & 63.80 & 46.88 & 64.12 \\
\hline
Seq-ALP    & \textbf{95.18} & \textbf{57.44} & \textbf{69.49} & \textbf{64.32} & \textbf{51.17} & \textbf{67.52} \\
\bottomrule
\end{tabular}
\caption{Single-turn validation on Qwen3-4B under the same evaluation protocol. Seq-ALP achieves the best average score across benchmarks.}
\label{tab:single_turn_qwen3_4b}
\end{table}

\textbf{ALP-type delivers the top performance.}
Under single-turn evaluation in Table \ref{tab:single_turn_results}, Token-ALP attains the best overall average score ($37.87$), with Seq-ALP second ($36.83$). Token-MIS is the strongest baseline, while Seq-MIS, GRPO, and Seq-Bypass perform worse overall. To test whether the benefit of perturbation is specific to Qwen2.5-Math-1.5B, we further repeat the single-turn experiment on Qwen3-4B. Table~\ref{tab:single_turn_qwen3_4b} shows that Seq-ALP achieves the best overall average score ($67.52$), outperforming GRPO ($66.24$), Token-MIS ($65.33$), and Seq-Bypass ($64.12$). This suggests that the gain from ALP is not tied to one particular base model and persists at a larger model scale.


\textbf{ALP consistently improves training stability.}
Figure \ref{fig:single_turn_comparison} shows that both token-level and sequence-level ALP remain stable throughout training: gradient norms stay controlled, entropy remains in a reasonable range, and rollout-training KL stays near zero. By contrast, GRPO and Seq-MIS exhibit late-stage spikes in gradient norm and KL together with reward collapse, while Seq-Bypass and Token-MIS are more stable but still less competitive than ALP overall.

\section{Multi-Turn Agentic Reasoning}

\subsection{Experimental settings}

We study multi-turn Tool-Integrated Reasoning (TIR) with a Python interpreter. We adapt the SimpleTIR codebase \citep{xue2025simpletir}, mask void turns in the loss, and train on data merged from SimpleRL \citep{zeng2025simplerl}, Deepscaler \citep{deepscaler2025}, and rStar2-Agent \citep{shang2025rstar2}. ALP follows the zero-RL setting and starts from Qwen2.5-7B-base. The rollout batch size is $512$, the mini-update size is $32$, the maximum response length is $16$K, and each episode allows up to $5$ code-execution turns. We use the same baselines and evaluation benchmarks as in the single-turn setting.

\subsection{Main Results}

\begin{table}[ht]
\small
\centering
\setlength{\tabcolsep}{6pt}
\begin{tabular}{c|cccccc}
\hline
Algorithm & Math500 & Minerva Math & Olympiad Bench & AIME24 & AIME25 & Average \\
\hline
GRPO                & 80.90 & 42.13 & 49.59 & 35.42 & 24.79 & 46.57\\
Seq-Bypass                & 81.74 & 39.21 & 51.08 & 37.19 & 24.06 & 46.66\\
Token-MIS                & 83.29 & 41.45 & 50.65 & 39.48 & 28.85 & 48.74\\
Seq-MIS                & 80.61 & 42.05 & 48.38 & 39.48 & 24.17 & 46.94\\
\hline
Token-ALP                & 83.48 & \textbf{43.18} & 51.55 & 38.65 & \textbf{31.25} & 49.62\\
\textbf{Seq-ALP}               & \textbf{84.29} & 43.10 & \textbf{52.75 }& \textbf{43.85} & 28.65 & \textbf{50.53}\\
\hline
\end{tabular}
\caption{TIR evaluation: the test accuracy average@32 under temperature $1.0$ across the five benchmarks. ALP has the best performance.}
\label{tab:multi-turn perform}
\end{table}

\begin{figure}[ht]
    \centering
    \begin{subfigure}[t]{0.32\textwidth}
        \centering
        \includegraphics[width=\textwidth]{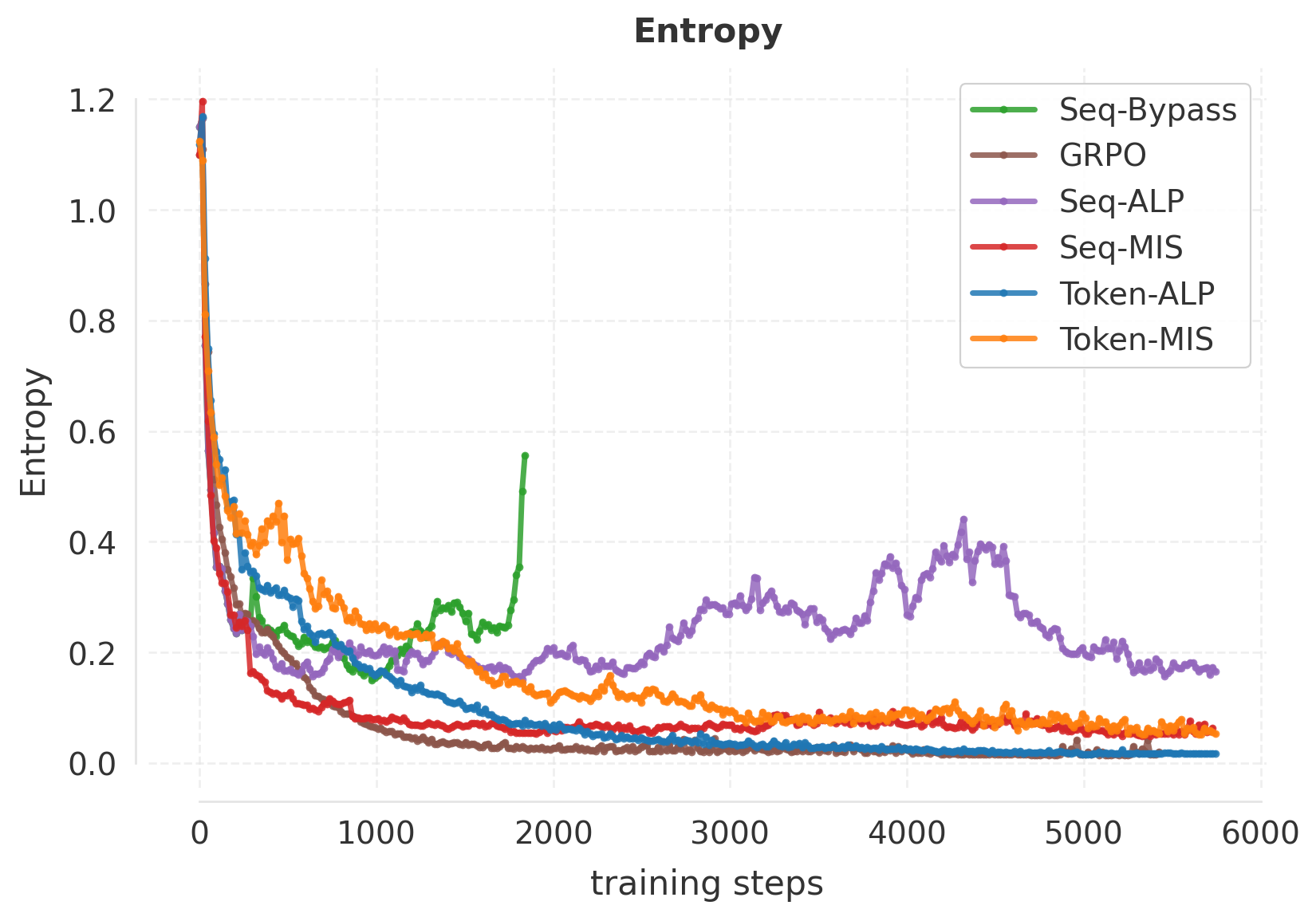}
    \end{subfigure}
    \hfill
    \begin{subfigure}[t]{0.32\textwidth}
        \centering
        \includegraphics[width=\textwidth]{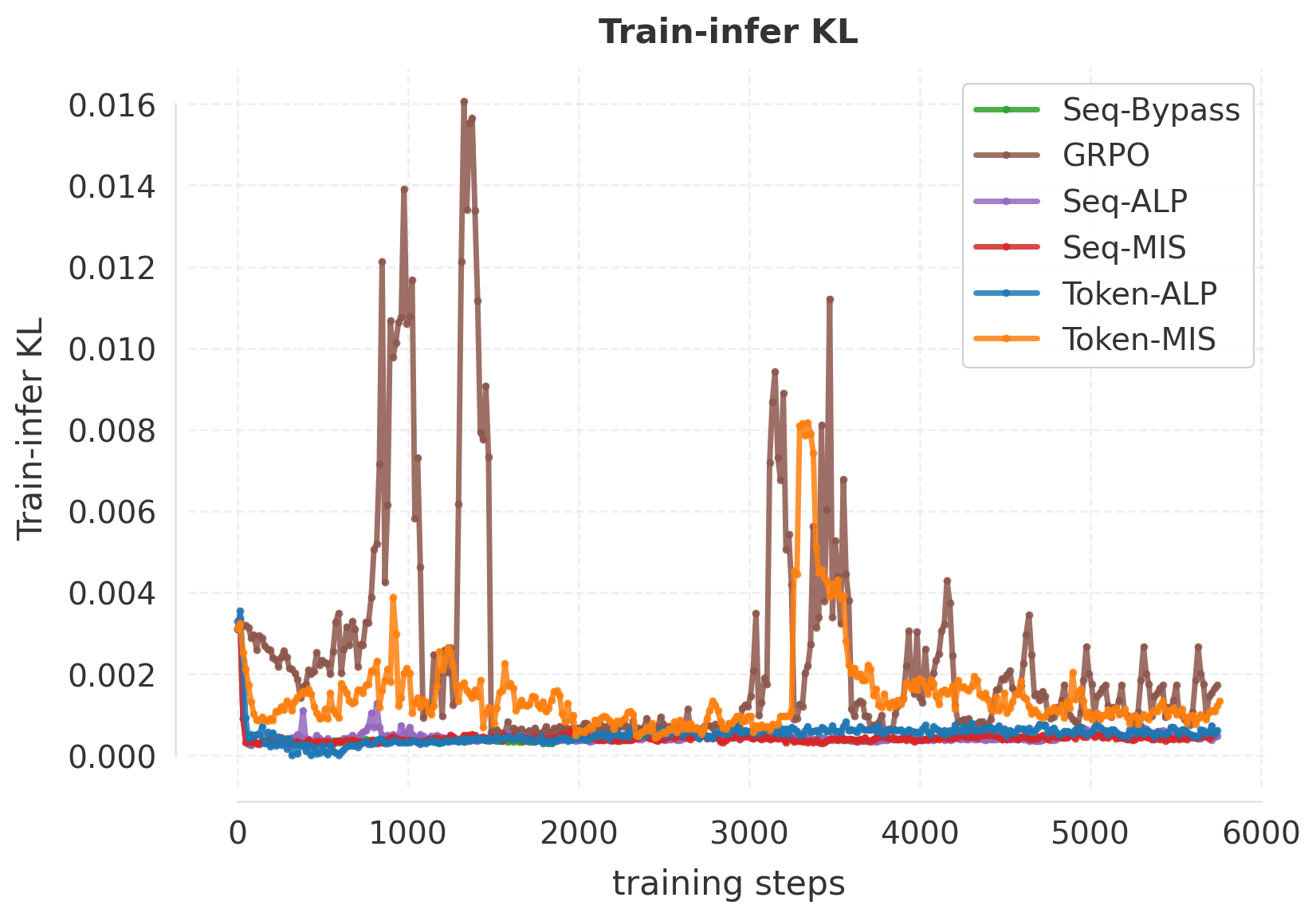}
    \end{subfigure}
    \begin{subfigure}[t]{0.32\textwidth}
        \centering
        \includegraphics[width=\textwidth]{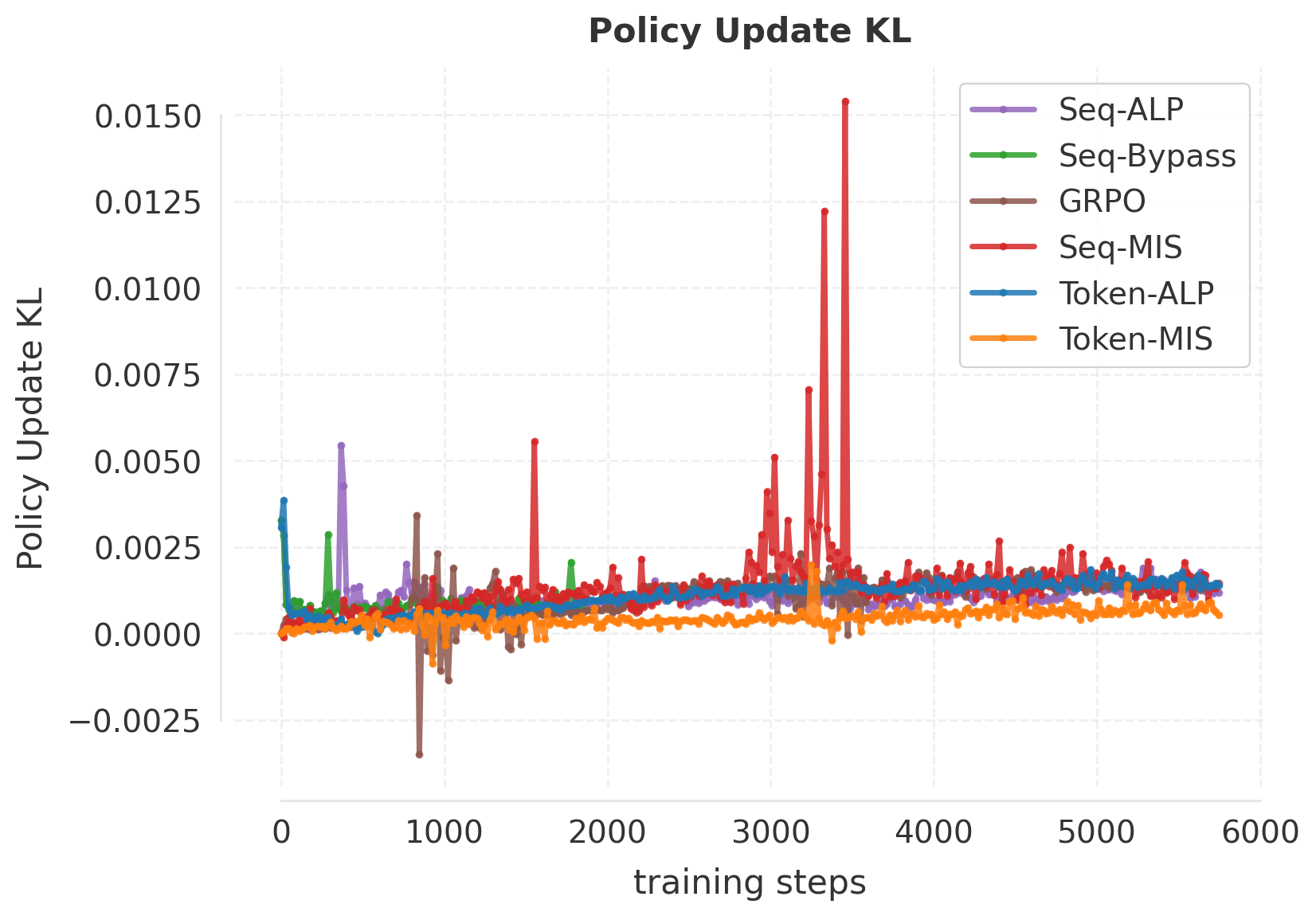}
    \end{subfigure}
    \caption{Multi-turn training dynamics. Seq-ALP maintains higher but controlled entropy and stable convergence in both KL metrics, while Token-MIS is unstable in train-inference KL and Seq-MIS shows spikes in policy-update KL.}
    \label{fig:metric_tir}
\end{figure}

\begin{figure}[ht]
    \centering
    \begin{subfigure}[t]{0.45\textwidth}
        \centering
        \includegraphics[width=\textwidth]{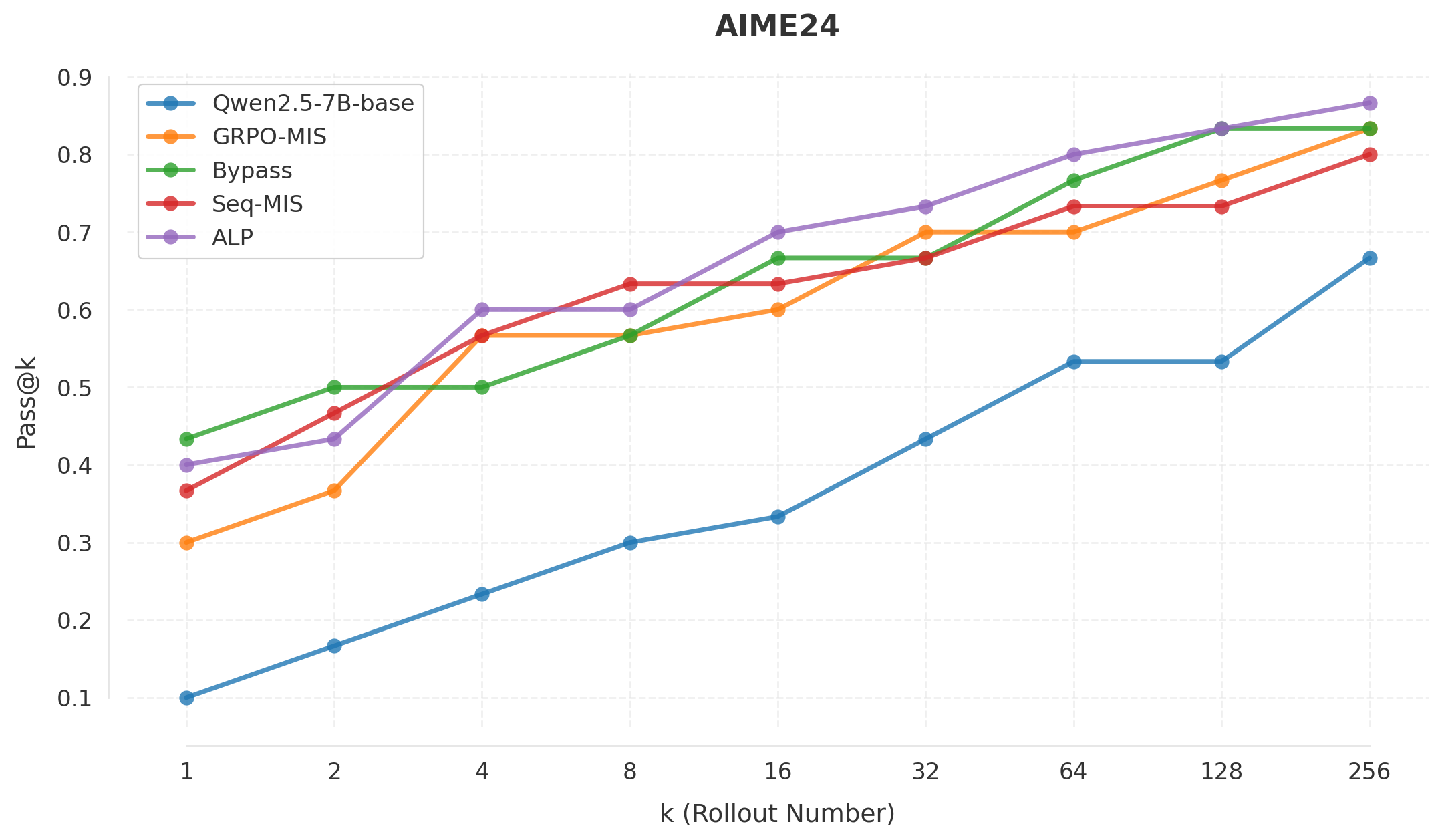}
    \end{subfigure}
    \hfill
    \begin{subfigure}[t]{0.45\textwidth}
        \centering
        \includegraphics[width=\textwidth]{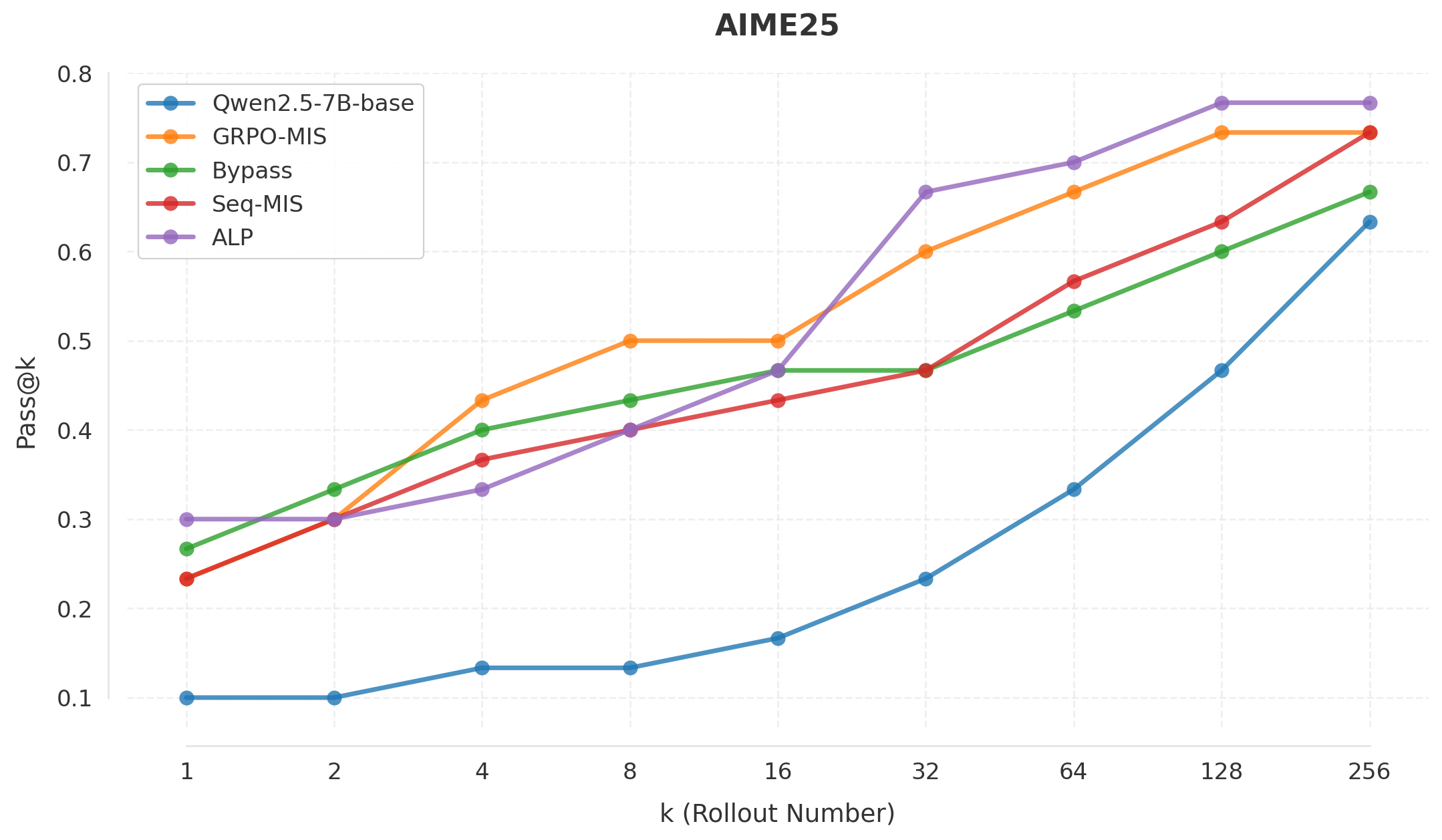}
    \end{subfigure}
    \caption{Pass@$k$ on AIME 2024 and AIME 2025 for TIR. ALP achieves the highest accuracy over rollout budgets $k=16\sim256$, indicating improved exploration efficiency and solution diversity.}
    \label{fig:passk}
\end{figure}

\textbf{ALP has the best performance.} On multi-turn TIR benchmarks in Table \ref{tab:multi-turn perform}, Seq-ALP achieves the highest overall average accuracy ($50.53$), outperforming Token-MIS, Seq-MIS, and Seq-Bypass. The gains are broad-based across benchmarks, with only AIME25 slightly favoring Token-MIS.

\textbf{ALP enhances training stability.} Figure \ref{fig:metric_tir} shows that Seq-ALP maintains controlled train-inference KL and preserves higher but stable entropy. While the reward for MIS does not collapse, their KL divergence still collapse and the additionally masking easily results in early plateau.

\begin{remark}[Token level and sequence level]
    An observation is that token-level ALP performs best in single-turn settings, while sequence-level ALP performs best in multi-turn settings. A plausible explanation is an efficiency–stability trade-off. Token-level ratios can be biased under policy drift, but in relatively stable single-turn training this bias is small, and clipping only affects the few tokens whose ratios exceed the threshold, thus preserving most learning signal. In contrast, sequence-level ratios are less biased in principle, but are more sensitive to outliers: extreme tokens can cause the entire trajectory to be clipped, discarding substantial amount of signal. As drift and mismatch become more severe in multi-turn training, the token-level bias can grow, making the lower-bias sequence-level variant more stable.
\end{remark}

\textbf{Perturbation promotes exploration efficiency.} Finally, the Pass@$k$ curves on AIME24 and 25 provide direct evidence that latent-space perturbations translate into more effective exploration (Figure~\ref{fig:passk}). Across moderate-to-large rollout budgets $k$ ($16$--$256$), ALP consistently attains the highest Pass@$k$, with the advantage becoming more pronounced as $k$ increases, suggesting that perturbing hidden states helps generate a more diverse set of candidate solution trajectories and thus improves the probability of sampling a correct reasoning path under fixed decoding temperature.

\section{Ablations}

\subsection{Perturbation targets}\label{sec:ablation_targets}
\begin{figure}[t]
    \centering
    \includegraphics[width=0.8\linewidth]{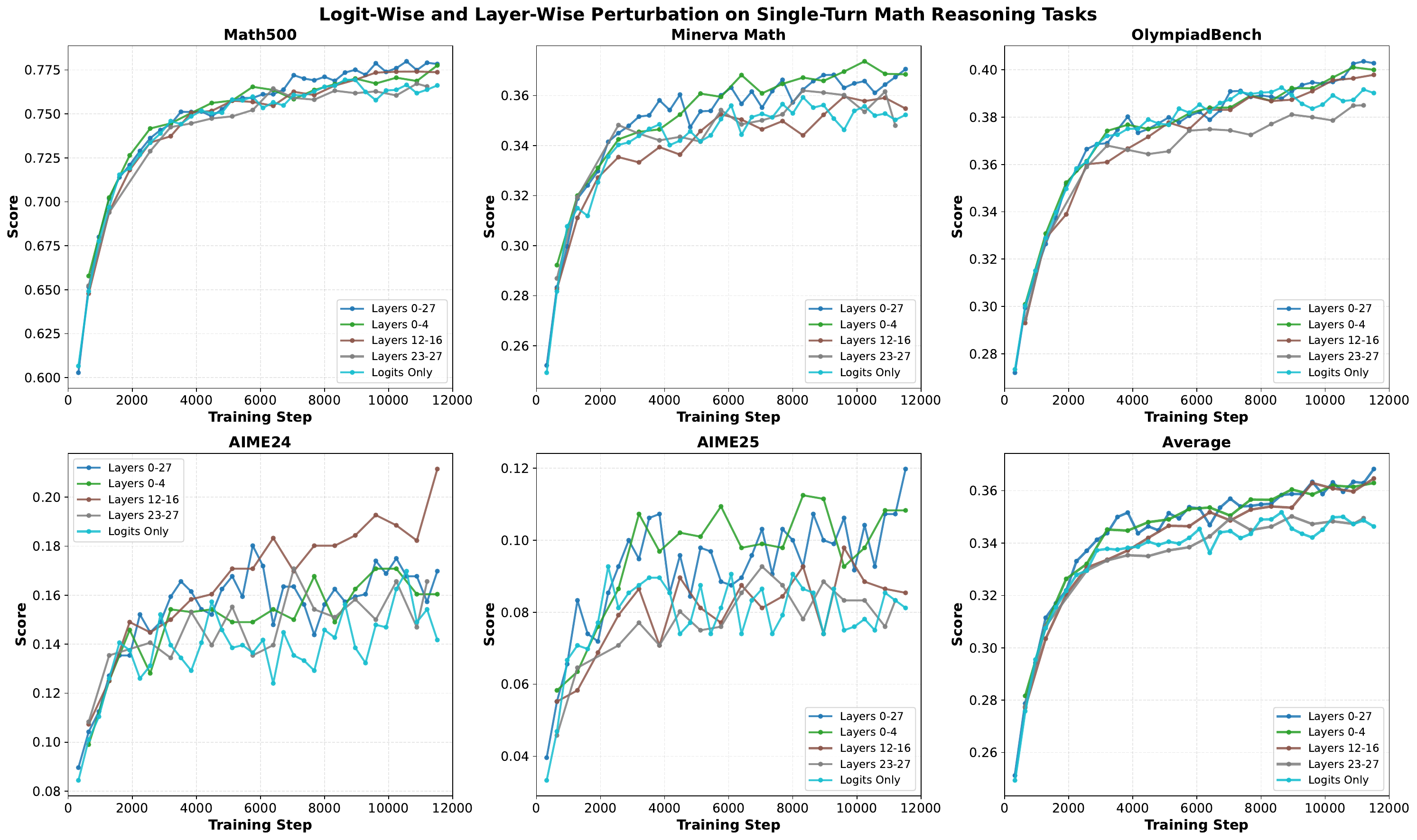}
    \caption{Perturbation-layer ablation in the single-turn setting. The results reveal a clear performance hierarchy: All layers $>$ Partial layers $>$ Logits only; Early layers $>$ Late layers.}
    \label{fig:ablation_targets}
\end{figure}

We study how the \emph{location} of Gaussian perturbations affects both performance and stability. Starting from the same base setup, we inject perturbations into (i) all transformer layers (0--27), (ii) early layers (0--4), (iii) middle layers (12--16), (iv) late layers (19--23), or (v) the output logits only (no hidden-state perturbation).

\paragraph{Main finding: All layers $>$ partial layers $>$ logits-only.}
Across single-turn benchmarks (Figure~\ref{fig:ablation_targets}), perturbing \emph{all} layers consistently achieves the strongest average accuracy, while restricting perturbations to a narrow band of layers is weaker. Logits-only perturbation is the least effective. This pattern is consistent with the view that ALP benefits from representation-level family enlargement rather than output noise alone, since logits-only perturbation is substantially weaker.

\paragraph{Multi-turn Ablation.}
\begin{table}[t]
\small
\centering
\setlength{\tabcolsep}{6pt}
\begin{tabular}{c|cccccc}
\hline
Algorithm & Math500 & Minerva Math & Olympiad Bench & AIME24 & AIME25 & Average \\
\hline
Layers 0-5                      & 83.58 & 43.62 & 49.88 & 38.85 & 25.31 & 48.25\\
Layers 12-17                    & 82.19 & 41.70 & 51.69 & 39.69 & 27.29 & 48.51\\
Layers 23-27                    & 82.77 & 43.97 & 49.60 & 40.10 & 26.87 & 48.66\\
\textbf{Layers 0-27}               & \textbf{84.29} & \textbf{43.10} & \textbf{52.75 }& \textbf{43.85} & 28.65 & \textbf{50.53}\\
\hline
\end{tabular}
\caption{Perturbation-layer ablation in the multi-turn setting. All-layer perturbation outperforms partial-layer perturbations.}
\label{tab:multi-turn layer ablat}
\end{table}

The same conclusion holds in multi-turn agentic evaluation (Table~\ref{tab:multi-turn layer ablat}). All-layer perturbation attains the best overall score across benchmarks, whereas partial-layer perturbations trail behind. Since multi-turn rollouts compound distributional drift over time, this result indicates that perturbing broadly across depth is particularly effective at maintaining robustness under sequential feedback and tool-induced distribution shift.

\paragraph{Why do partial or late-layer perturbations under-perform?}
Layer ablations suggest that perturbing all layers is the most reliable choice across settings; in single-turn tasks, earlier-layer perturbations can outperform later-layer ones. We interpret this through a \emph{distribution-family} lens: injecting perturbations into hidden states enlarges the effective policy family $\{\pi_{\theta,\sigma}\}$, providing additional degrees of freedom to absorb inference-time distortions and increase support overlap with the rollout distribution. In contrast, restricting perturbations to a narrow late-layer band expands the family in a more limited and output-local way, which is less effective at matching the rollout distribution and more likely to concentrate changes near the decision boundary. Consistent with this view, our diagnostics show that all-layer perturbation achieves the smallest tail deviations in probability ratios, whereas late-layer-only perturbation exhibits heavier tails.

\section{Conclusions}
We introduced \textbf{Adaptive Layerwise Perturbation (ALP)}, a unified approach for stabilizing off-policy RL in large language models. ALP injects learnable hidden-state perturbations when computing the updated policy and optimizes a single importance ratio against the rollout distribution, thereby addressing policy staleness and training-inference mismatch within one formulation. Our analysis suggests that perturbation helps both by tightening the train-infer discrepancy and by smoothing sharp local optimization geometry. Empirically, across single-turn and multi-turn reasoning tasks, ALP improves both training stability and final performance, while ablations show that hidden-state perturbation, especially across all layers, is most effective. Promising future directions include fully asynchronous RL and Mixture-of-Experts settings, where off-policy effects and systems mismatch can be even more severe.

\bibliography{main}

@article{ho2020denoising,
  title={Denoising diffusion probabilistic models},
  author={Ho, Jonathan and Jain, Ajay and Abbeel, Pieter},
  journal={Advances in neural information processing systems},
  volume={33},
  pages={6840--6851},
  year={2020}
}

@article{song2020denoising,
  title={Denoising diffusion implicit models},
  author={Song, Jiaming and Meng, Chenlin and Ermon, Stefano},
  journal={arXiv preprint arXiv:2010.02502},
  year={2020}
}

@article{dhariwal2021diffusion,
  title={Diffusion models beat gans on image synthesis},
  author={Dhariwal, Prafulla and Nichol, Alexander},
  journal={Advances in neural information processing systems},
  volume={34},
  pages={8780--8794},
  year={2021}
}

@inproceedings{saharia2022palette,
  title={Palette: Image-to-image diffusion models},
  author={Saharia, Chitwan and Chan, William and Chang, Huiwen and Lee, Chris and Ho, Jonathan and Salimans, Tim and Fleet, David and Norouzi, Mohammad},
  booktitle={ACM SIGGRAPH 2022 conference proceedings},
  pages={1--10},
  year={2022}
}

@inproceedings{rombach2022high,
  title={High-resolution image synthesis with latent diffusion models},
  author={Rombach, Robin and Blattmann, Andreas and Lorenz, Dominik and Esser, Patrick and Ommer, Bj{\"o}rn},
  booktitle={Proceedings of the IEEE/CVF conference on computer vision and pattern recognition},
  pages={10684--10695},
  year={2022}
}

@article{shen2023engression,
  title={Engression: Extrapolation for nonlinear regression?},
  author={Shen, Xinwei and Meinshausen, Nicolai},
  journal={arXiv preprint arXiv:2307.00835},
  year={2023}
}

@article{ning2023elucidating,
  title={Elucidating the exposure bias in diffusion models},
  author={Ning, Mang and Li, Mingxiao and Su, Jianlin and Salah, Albert Ali and Ertugrul, Itir Onal},
  journal={arXiv preprint arXiv:2308.15321},
  year={2023}
}

@article{li2023alleviating,
  title={Alleviating exposure bias in diffusion models through sampling with shifted time steps},
  author={Li, Mingxiao and Qu, Tingyu and Yao, Ruicong and Sun, Wei and Moens, Marie-Francine},
  journal={arXiv preprint arXiv:2305.15583},
  year={2023}
}

@misc{yao2025rollouttrainingmismatch,
  title        = {On the Rollout-Training Mismatch in Modern RL Systems},
  author       = {Yao, Feng and Liu, Liyuan and Zhang, Dinghuai and Dong, Chengyu and Shang, Jingbo and Gao, Jianfeng},
  year         = {2025},
  howpublished = {OPT 2025: 17th Annual Workshop on Optimization for Machine Learning},
  url          = {https://opt-ml.org/papers/2025/paper116.pdf},
}

@misc{deepscaler2025,
  title={DeepScaleR: Surpassing O1-Preview with a 1.5B Model by Scaling RL},
  author={Michael Luo and Sijun Tan and Justin Wong and Xiaoxiang Shi and William Y. Tang and Manan Roongta and Colin Cai and Jeffrey Luo and Li Erran Li and Raluca Ada Popa and Ion Stoica},
  howpublished={\url{https://pretty-radio-b75.notion.site/DeepScaleR-Surpassing-O1-Preview-with-a-1-5B-Model-by-Scaling-RL-19681902c1468005bed8ca303013a4e2}},
  note={Notion Blog},
  year={2025}
}

@article{shang2025rstar2,
  title={rstar2-agent: Agentic reasoning technical report},
  author={Shang, Ning and Liu, Yifei and Zhu, Yi and Zhang, Li Lyna and Xu, Weijiang and Guan, Xinyu and Zhang, Buze and Dong, Bingcheng and Zhou, Xudong and Zhang, Bowen and others},
  journal={arXiv preprint arXiv:2508.20722},
  year={2025}
}

@article{hao2025towards,
  title={Towards Better Generalization via Distributional Input Projection Network},
  author={Hao, Yifan and Lu, Yanxin and Zhang, Hanning and Shen, Xinwei and Zhang, Tong},
  journal={arXiv preprint arXiv:2506.04690},
  year={2025}
}

@article{qi2025defeating,
  title={Defeating the training-inference mismatch via fp16},
  author={Qi, Penghui and Liu, Zichen and Zhou, Xiangxin and Pang, Tianyu and Du, Chao and Lee, Wee Sun and Lin, Min},
  journal={arXiv preprint arXiv:2510.26788},
  year={2025}
}

@article{li2022positive,
  title={Positive-incentive noise},
  author={Li, Xuelong},
  journal={IEEE Transactions on Neural Networks and Learning Systems},
  volume={35},
  number={6},
  pages={8708--8714},
  year={2022},
  publisher={IEEE}
}

@inproceedings{pereira2021multi,
  title={Multi-layer random perturbation training for improving model generalization efficiently},
  author={Pereira, Lis Kanashiro and Taya, Yuki and Kobayashi, Ichiro},
  booktitle={Proceedings of the Fourth BlackboxNLP Workshop on Analyzing and Interpreting Neural Networks for NLP},
  pages={303--310},
  year={2021}
}

@inproceedings{cohen2019rs,
  title={Certified adversarial robustness via randomized smoothing},
  author={Cohen, Jeremy and Rosenfeld, Elan and Kolter, Zico},
  booktitle={international conference on machine learning},
  pages={1310--1320},
  year={2019},
  organization={PMLR}
}

@article{salman2019provably,
  title={Provably robust deep learning via adversarially trained smoothed classifiers},
  author={Salman, Hadi and Li, Jerry and Razenshteyn, Ilya and Zhang, Pengchuan and Zhang, Huan and Bubeck, Sebastien and Yang, Greg},
  journal={Advances in neural information processing systems},
  volume={32},
  year={2019}
}

@inproceedings{lecuyer2019certified,
  title={Certified robustness to adversarial examples with differential privacy},
  author={Lecuyer, Mathias and Atlidakis, Vaggelis and Geambasu, Roxana and Hsu, Daniel and Jana, Suman},
  booktitle={2019 IEEE symposium on security and privacy (SP)},
  pages={656--672},
  year={2019},
  organization={IEEE}
}

@inproceedings{yang2020randomized,
  title={Randomized smoothing of all shapes and sizes},
  author={Yang, Greg and Duan, Tony and Hu, J Edward and Salman, Hadi and Razenshteyn, Ilya and Li, Jerry},
  booktitle={International Conference on Machine Learning},
  pages={10693--10705},
  year={2020},
  organization={PMLR}
}

@article{yu2023noisynn,
  title={Noisynn: Exploring the impact of information entropy change in learning systems},
  author={Yu, Xiaowei and Huang, Zhe and Chen, Minheng and Xue, Yao and Liu, Tianming and Zhu, Dajiang},
  journal={arXiv e-prints},
  pages={arXiv--2309},
  year={2023}
}

@inproceedings{moreno2018forward,
  title={Forward noise adjustment scheme for data augmentation},
  author={Moreno-Barea, Francisco J and Strazzera, Fiammetta and Jerez, Jos{\'e} M and Urda, Daniel and Franco, Leonardo},
  booktitle={2018 IEEE symposium series on computational intelligence (SSCI)},
  pages={728--734},
  year={2018},
  organization={IEEE}
}

@article{he2025nondeterminism,
  author = {Horace He and Thinking Machines Lab},
  title = {Defeating Nondeterminism in LLM Inference},
  journal = {Thinking Machines Lab: Connectionism},
  year = {2025},
  note = {https://thinkingmachines.ai/blog/defeating-nondeterminism-in-llm-inference/},
  doi = {10.64434/tml.20250910}
}

@article{li2025trust,
  title={Trust Region Masking for Long-Horizon LLM Reinforcement Learning},
  author={Li, Yingru and Liu, Jiacai and Xu, Jiawei and Tong, Yuxuan and Li, Ziniu and Wang, Baoxiang},
  journal={arXiv preprint arXiv:2512.23075},
  year={2025}
}

@inproceedings{kwon2023efficient,
  title={Efficient Memory Management for Large Language Model Serving with PagedAttention},
  author={Woosuk Kwon and Zhuohan Li and Siyuan Zhuang and Ying Sheng and Lianmin Zheng and Cody Hao Yu and Joseph E. Gonzalez and Hao Zhang and Ion Stoica},
  booktitle={Proceedings of the ACM SIGOPS 29th Symposium on Operating Systems Principles},
  year={2023}
}

@article{zheng2024sglang,
  title={Sglang: Efficient execution of structured language model programs},
  author={Zheng, Lianmin and Yin, Liangsheng and Xie, Zhiqiang and Sun, Chuyue Livia and Huang, Jeff and Yu, Cody Hao and Cao, Shiyi and Kozyrakis, Christos and Stoica, Ion and Gonzalez, Joseph E and others},
  journal={Advances in neural information processing systems},
  volume={37},
  pages={62557--62583},
  year={2024}
}

@article{fu2025areal,
  title={AReaL: A Large-Scale Asynchronous Reinforcement Learning System for Language Reasoning},
  author={Fu, Wei and Gao, Jiaxuan and Shen, Xujie and Zhu, Chen and Mei, Zhiyu and He, Chuyi and Xu, Shusheng and Wei, Guo and Mei, Jun and Wang, Jiashu and others},
  journal={arXiv preprint arXiv:2505.24298},
  year={2025}
}

@misc{liu2025flashrl,
  title = {FlashRL: 8Bit Rollouts, Full Power RL},
  url = {https://fengyao.notion.site/flash-rl},
  author = {Liu, Liyuan and Yao, Feng and Zhang, Dinghuai and Dong, Chengyu and Shang, Jingbo and Gao, Jianfeng},
  journal = {Feng Yao's Notion},
  year = {2025},
  month = aug,
}

@article{zheng2025group,
  title={Group sequence policy optimization},
  author={Zheng, Chujie and Liu, Shixuan and Li, Mingze and Chen, Xiong-Hui and Yu, Bowen and Gao, Chang and Dang, Kai and Liu, Yuqiong and Men, Rui and Yang, An and others},
  journal={arXiv preprint arXiv:2507.18071},
  year={2025}
}

@inproceedings{schulman2015trust,
  title={Trust region policy optimization},
  author={Schulman, John and Levine, Sergey and Abbeel, Pieter and Jordan, Michael and Moritz, Philipp},
  booktitle={International conference on machine learning},
  pages={1889--1897},
  year={2015},
  organization={PMLR}
}

@software{Kydlicek_Math-Verify_Math_Verification,
author = {Kydlíček, Hynek},
license = {Apache-2.0},
title = {{Math-Verify: Math Verification Library}},
url = {https://github.com/huggingface/math-verify},
version = {0.6.1}
}

@misc{li_rl_collapse,
  author       = {Jiacai, Liu* and Yingru, Li* and   Yuqian, Fu and Jiawei, Wang and Qian Liu and Zhuo Jiang},
  title        = {When Speed Kills Stability: Demystifying {RL} Collapse from the Training-Inference Mismatch},
  howpublished = {\url{https://yingru.notion.site/When-Speed-Kills-Stability-Demystifying-RL-Collapse-from-the-Training-Inference-Mismatch-271211a558b7808d8b12d403fd15edda}},
  year         = {2025},
  note         = {Accessed: 2026-01-28}
}

@article{xue2025simpletir,
  title={Simpletir: End-to-end reinforcement learning for multi-turn tool-integrated reasoning},
  author={Xue, Zhenghai and Zheng, Longtao and Liu, Qian and Li, Yingru and Zheng, Xiaosen and Ma, Zejun and An, Bo},
  journal={arXiv preprint arXiv:2509.02479},
  year={2025}
}

@article{yang2024qwen2,
  title={Qwen2. 5-math technical report: Toward mathematical expert model via self-improvement},
  author={Yang, An and Zhang, Beichen and Hui, Binyuan and Gao, Bofei and Yu, Bowen and Li, Chengpeng and Liu, Dayiheng and Tu, Jianhong and Zhou, Jingren and Lin, Junyang and others},
  journal={arXiv preprint arXiv:2409.12122},
  year={2024}
}

@article{he2024olympiadbench,
  title={Olympiadbench: A challenging benchmark for promoting agi with olympiad-level bilingual multimodal scientific problems},
  author={He, Chaoqun and Luo, Renjie and Bai, Yuzhuo and Hu, Shengding and Thai, Zhen Leng and Shen, Junhao and Hu, Jinyi and Han, Xu and Huang, Yujie and Zhang, Yuxiang and others},
  journal={arXiv preprint arXiv:2402.14008},
  year={2024}
}

@article{lewkowycz2022solving,
  title={Solving quantitative reasoning problems with language models},
  author={Lewkowycz, Aitor and Andreassen, Anders and Dohan, David and Dyer, Ethan and Michalewski, Henryk and Ramasesh, Vinay and Slone, Ambrose and Anil, Cem and Schlag, Imanol and Gutman-Solo, Theo and others},
  journal={Advances in Neural Information Processing Systems},
  volume={35},
  pages={3843--3857},
  year={2022}
}

@article{sheng2024hybridflow,
  title   = {HybridFlow: A Flexible and Efficient RLHF Framework},
  author  = {Guangming Sheng and Chi Zhang and Zilingfeng Ye and Xibin Wu and Wang Zhang and Ru Zhang and Yanghua Peng and Haibin Lin and Chuan Wu},
  year    = {2024},
  journal = {arXiv preprint arXiv: 2409.19256}
}

@misc{zeng2025simplerl,
  title={7B Model and 8K Examples: Emerging Reasoning with Reinforcement Learning is Both Effective and Efficient},
  author={Weihao Zeng and Yuzhen Huang and Wei Liu and Keqing He and Qian Liu and Zejun Ma and Junxian He},
  year={2025},
  howpublished={\url{https://hkust-nlp.notion.site/simplerl-reason}},
  note={Notion Blog}
}

@article{shao2024deepseekmath,
  title={Deepseekmath: Pushing the limits of mathematical reasoning in open language models},
  author={Shao, Zhihong and Wang, Peiyi and Zhu, Qihao and Xu, Runxin and Song, Junxiao and Zhang, Mingchuan and Li, YK and Wu, Y and Guo, Daya},
  journal={arXiv preprint arXiv:2402.03300},
  year={2024}
}

@article{schulman2017proximal,
  title={Proximal policy optimization algorithms},
  author={Schulman, John and Wolski, Filip and Dhariwal, Prafulla and Radford, Alec and Klimov, Oleg},
  journal={arXiv preprint arXiv:1707.06347},
  year={2017}
}

@article{hendrycks2021measuring,
  title={Measuring mathematical problem solving with the math dataset},
  author={Hendrycks, Dan and Burns, Collin and Kadavath, Saurav and Arora, Akul and Basart, Steven and Tang, Eric and Song, Dawn and Steinhardt, Jacob},
  journal={arXiv preprint arXiv:2103.03874},
  year={2021}
}

@misc{zhao2023pytorchfsdpexperiencesscaling,
      title={PyTorch FSDP: Experiences on Scaling Fully Sharded Data Parallel}, 
      author={Yanli Zhao and Andrew Gu and Rohan Varma and Liang Luo and Chien-Chin Huang and Min Xu and Less Wright and Hamid Shojanazeri and Myle Ott and Sam Shleifer and Alban Desmaison and Can Balioglu and Pritam Damania and Bernard Nguyen and Geeta Chauhan and Yuchen Hao and Ajit Mathews and Shen Li},
      year={2023},
      eprint={2304.11277},
      archivePrefix={arXiv},
      primaryClass={cs.DC},
      url={https://arxiv.org/abs/2304.11277}, 
}

@misc{cheng2025revisiting,
  title         = {Revisiting Reinforcement Learning for LLM Reasoning from A Cross-Domain Perspective},
  author        = {Zhoujun Cheng and Shibo Hao and Tianyang Liu and Fan Zhou and Yutao Xie and Feng Yao and Yuexin Bian and Yonghao Zhuang and Nilabjo Dey and Yuheng Zha and Yi Gu and Kun Zhou and Yuqi Wang and Yuan Li and Richard Fan and Jianshu She and Chengqian Gao and Abulhair Saparov and Haonan Li and Taylor W. Killian and Mikhail Yurochkin and Zhengzhong Liu and Eric P. Xing and Zhiting Hu},
  journal       = {arXiv preprint arXiv:2506.14965},
  year          = {2025},
  doi           = {10.48550/arXiv.2506.14965},
  url           = {https://arxiv.org/abs/2506.14965}
}

@misc{openr1,
    title = {Open R1: A fully open reproduction of DeepSeek-R1},
    url = {https://github.com/huggingface/open-r1},
    author = {{Hugging Face}},
    month = {January},
    year = {2025}
}

@misc{yu2025dapoopensourcellmreinforcement,
      title={DAPO: An Open-Source LLM Reinforcement Learning System at Scale}, 
      author={Qiying Yu and Zheng Zhang and Ruofei Zhu and Yufeng Yuan and Xiaochen Zuo and Yu Yue and Weinan Dai and Tiantian Fan and Gaohong Liu and Lingjun Liu and Xin Liu and Haibin Lin and Zhiqi Lin and Bole Ma and Guangming Sheng and Yuxuan Tong and Chi Zhang and Mofan Zhang and Wang Zhang and Hang Zhu and Jinhua Zhu and Jiaze Chen and Jiangjie Chen and Chengyi Wang and Hongli Yu and Yuxuan Song and Xiangpeng Wei and Hao Zhou and Jingjing Liu and Wei-Ying Ma and Ya-Qin Zhang and Lin Yan and Mu Qiao and Yonghui Wu and Mingxuan Wang},
      year={2025},
      eprint={2503.14476},
      archivePrefix={arXiv},
      primaryClass={cs.LG},
      url={https://arxiv.org/abs/2503.14476}, 
}

@article{yang2025qwen3,
  title={Qwen3 technical report},
  author={Yang, An and Li, Anfeng and Yang, Baosong and Zhang, Beichen and Hui, Binyuan and Zheng, Bo and Yu, Bowen and Gao, Chang and Huang, Chengen and Lv, Chenxu and others},
  journal={arXiv preprint arXiv:2505.09388},
  year={2025}
}
\bibliographystyle{tmlr}

\newpage
\appendix

\section{Additional Related Works}
\paragraph{Off-policy}
Off-policy optimization is a challenging setting in reinforcement learning. Trust Region Policy Optimization (TRPO) \citep{schulman2015trust} shows that when the KL-divergence is updated and the behavior policy is bounded, policy improvement is guaranteed. PPO \citep{schulman2017proximal} further simplifies the optimization by proposing a surrogate clipping method. The off-policy issue becomes more severe in LLM reasoning tasks, especially in long CoT and multi-turn settings. To trade-off the bias and balance for importance ratios, sequence-level importance ratios \citep{zheng2025group} and more stringent masking \citep{li2025trust} are developed to stabilize training. Moreover, the quantization and batching issues in the inference engine make the off-policy unavoidable. Recent works \citep{yao2025rollouttrainingmismatch,li_rl_collapse} correct the distribution by multiplying another importance ratio $\pi^{\train}_{\theta_\old}/\pi^{\infer}_{\theta_\old}$ and clipping or masking the outliers. 

Recent systems efforts aim to \emph{eliminate} training-inference mismatch by enforcing bitwise-consistent inference, enabling closer-to on-policy RL in practice \citep{he2025nondeterminism} or using higher precisions \citep{qi2025defeating}.
These directions are promising, but maintaining strict bitwise consistency is computationally inefficient, and can be challenging under common production constraints such as changing execution engines, optional rollout quantization, fully asynchronous rollouts, and rapidly evolving kernels.
Our work is complementary: rather than assuming mismatch can be removed everywhere, we propose an algorithmic correction that remains robust when such mismatch persists, providing a unified importance-ratio formulation that tolerates heterogeneous sources of off-policy deviation. Although those methods are effective, they only focus on a specific problem and current RL training needs to combine the techniques and tune the parameters separately. The algorithm becomes even more complicated in fully-asynchronous settings \citep{fu2025areal} since there are three importance ratios. Hence, our work develops a unified importance ratio instead of dividing the importance ratio into several parts to solve general off-policy problems.

\section{Technical Lemmas}

\begin{lemma}[Stam's inequality]\label{lem:stam}
For any two independent random variables $(X, Y)$, we have
\begin{equation*}
    I(X + Y) \preceq \min \{ I(X) , I(Y) \},
\end{equation*}
where $I(\cdot)$ is defined as the Fisher information matrix of the corresponding random variable.    
\end{lemma}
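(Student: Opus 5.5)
The plan is to prove the Loewner-order inequality $I(X+Y)\preceq I(X)$ through the classical ``score as a conditional expectation'' argument. By symmetry in $X$ and $Y$ the same argument gives $I(X+Y)\preceq I(Y)$. The statement $I(X+Y)\preceq\min\{I(X),I(Y)\}$ will be read as ``$\preceq$ each of the two'', since a Loewner minimum need not exist. Throughout I assume the standard regularity under which Fisher information is defined. Concretely, $X$ has a differentiable density $p_X$ with score $\rho_X=\nabla\log p_X$ and $I(X)=\E[\rho_X(X)\rho_X(X)^T]<\infty$. If $I(X)=\infty$ the bound is trivial. Nothing is needed about $Y$ beyond independence.

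\textbf{Step 1 (score identity).} Let $Z=X+Y$. By independence, $Z$ has density $p_Z(z)=\int p_X(z-y)\,dP_Y(y)$. I would differentiate under the integral to get $\nabla p_Z(z)=\int \nabla p_X(z-y)\,dP_Y(y)$. Dividing by $p_Z(z)$ and using Bayes' rule, the conditional law of $X$ given $Z=z$ has density proportional to $p_X(z-y)$ in $y$. This yields
\[
\rho_Z(z)=\frac{\int \rho_X(z-y)\,p_X(z-y)\,dP_Y(y)}{\int p_X(z-y)\,dP_Y(y)}=\E\big[\rho_X(X)\,\big|\,Z=z\big].
\]

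\textbf{Step 2 (conditional Jensen).} For any fixed vector $v$, Step 1 and Jensen's inequality for conditional expectations give
\[
v^T I(Z) v=\E\big[(\E[v^T\rho_X(X)\mid Z])^2\big]\le \E\big[(v^T\rho_X(X))^2\big]=v^T I(X)v .
\]
Since $v$ is arbitrary, this is exactly $I(Z)\preceq I(X)$. Equivalently, the gap $I(X)-I(Z)$ is the expected conditional covariance of $\rho_X(X)$ given $Z$, which is positive semidefinite. Swapping roles when $Y$ has finite Fisher information gives $I(Z)\preceq I(Y)$.

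\textbf{Main obstacle.} The step I expect to need the most care is justifying the interchange of derivative and integral in Step 1, together with the finiteness of $p_Z$ and of its derivative. My first attempt is to note that finite Fisher information gives $\nabla p_X\in L^1$, by Cauchy--Schwarz:
\[
\int\|\nabla p_X\|=\int\|\rho_X\|\,p_X\le\sqrt{\operatorname{tr} I(X)}.
\]
Then $p_Z$ is a convolution of the probability measure $P_Y$ with a function whose distributional gradient lies in $L^1$. Its distributional gradient is therefore the convolution of $P_Y$ with $\nabla p_X$, which is the formula in Step 1 almost everywhere. This weak-derivative formulation suffices because Fisher information only involves $\nabla p_Z$ inside an integral. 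If a pointwise statement is wanted, I would instead first smooth $X$ by adding a tiny independent Gaussian, run the argument, and pass to the limit using lower semicontinuity of Fisher information.
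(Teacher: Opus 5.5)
The paper gives no proof of this lemma. It is listed among the technical lemmas as the classical Stam inequality, with neither an argument nor a citation, and none of the later proofs (of Theorems~\ref{thm:mismatch} and~\ref{thm:smooth}) uses it. So there is no route to compare against. Your proposal is a correct, self-contained proof of the stated form, read, as you rightly do, as the two separate bounds $I(X+Y)\preceq I(X)$ and $I(X+Y)\preceq I(Y)$.

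Your argument is the special case $A=I$ of the standard data-processing proof of the Fisher information inequality. The same identity gives $\rho_Z(Z)=\mathbb{E}[A\rho_X(X)+B\rho_Y(Y)\mid Z]$ for any matrices with $A+B=I$. The cross terms vanish because scores have mean zero, so $I(Z)\preceq A I(X)A^T+B I(Y)B^T$. The optimal choice of $A$ then yields Stam's inequality proper, $I(X+Y)^{-1}\succeq I(X)^{-1}+I(Y)^{-1}$. That version is stronger than the lemma but requires both summands to have finite Fisher information. Yours needs regularity of $X$ alone and nothing about $Y$, which is all the stated bound calls for.

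Two small points:
\begin{enumerate}
\item In Step~1 it is the conditional law of $Y$, not $X$, given $Z=z$ that has density proportional to $p_X(z-y)$ with respect to $P_Y(dy)$. Writing $X=z-Y$ then gives your formula for $\mathbb{E}[\rho_X(X)\mid Z=z]$.
\item The Gaussian-smoothing fallback does not avoid the regularity issue. With $G$ standard Gaussian and $\phi_\epsilon$ the $\mathcal{N}(0,\epsilon I)$ density, passing to the limit requires $\limsup_{\epsilon\to 0} I(X+\sqrt{\epsilon}\,G)\preceq I(X)$. That is itself an instance of the claim, and it rests on the same identity $\nabla(p_X*\phi_\epsilon)=(\nabla p_X)*\phi_\epsilon$ for $p_X\in W^{1,1}$.
\end{enumerate}
Your weak-derivative argument already closes the gap (Fubini plus integration by parts against a test function, using $\nabla p_X\in L^1$), so keep that and drop the smoothing. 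If ``differentiable density'' means only everywhere differentiable rather than $C^1$, add one remark. An everywhere-differentiable function with locally integrable gradient is absolutely continuous on almost every coordinate line, so its pointwise gradient coincides with its weak gradient.
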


\begin{lemma}\label{lem:kernel}
 If we assume $\delta \sim \mathcal{N}(0, \sigma^2 I_d)$, and define $\tilde{\pi}_\theta(a | x) := \E_\delta \pi(a | x + \delta) $,
 there will be
 \begin{equation*}
     \nabla_x \ln \tilde{\pi}_\theta (a | x) = \frac{1}{\sigma^2} \E_{\delta \sim q_\theta(\cdot| a, x)} [ \delta],
 \end{equation*}
 where
 \begin{equation*}
     q_\theta(\delta | a, x) = \frac{\pi_\theta(a | x+\delta) \phi(\delta)}{\tilde{\pi}_\theta(a | x)}, \quad \phi(\delta) \sim \mathcal{N}(0, \sigma^2 I_d).
 \end{equation*}
\end{lemma}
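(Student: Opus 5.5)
The identity in Lemma~\ref{lem:kernel} is a Tweedie-type formula for the Gaussian-smoothed policy. The plan is to rewrite the smoothing as a convolution, so that the $x$-dependence sits in the Gaussian kernel rather than inside $\pi_\theta$, and then differentiate the kernel.

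First, with the change of variables $u = x+\delta$, write
\begin{equation*}
\tilde{\pi}_\theta(a|x) = \int \pi_\theta(a|x+\delta)\,\phi(\delta)\,d\delta = \int \pi_\theta(a|u)\,\phi(u-x)\,du .
\end{equation*}
Now $x$ appears only through the Gaussian density $\phi(u-x) \propto \exp\!\left(-\|u-x\|_2^2/(2\sigma^2)\right)$. This has the useful consequence that no differentiability of $\pi_\theta$ in its input is needed. Since $0\le \pi_\theta(a|u)\le 1$ and the Gaussian and its $x$-derivative are dominated, locally uniformly in $x$, by an integrable function of $u$, dominated convergence lets us exchange $\nabla_x$ with the integral. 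Justifying this exchange is the only step needing care, and the boundedness of $\pi_\theta$ makes it routine.

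Second, differentiate the kernel: $\nabla_x \phi(u-x) = \frac{u-x}{\sigma^2}\,\phi(u-x)$. This gives
\begin{equation*}
\nabla_x \tilde{\pi}_\theta(a|x) = \frac{1}{\sigma^2}\int (u-x)\,\pi_\theta(a|u)\,\phi(u-x)\,du = \frac{1}{\sigma^2}\int \delta\,\pi_\theta(a|x+\delta)\,\phi(\delta)\,d\delta .
\end{equation*}

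Finally, divide by $\tilde{\pi}_\theta(a|x)$, which is strictly positive whenever $\pi_\theta(a|\cdot)$ is positive on a set of positive measure; this holds automatically for softmax policies. Using $\nabla_x \ln \tilde{\pi}_\theta = \nabla_x \tilde{\pi}_\theta/\tilde{\pi}_\theta$, the ratio $\pi_\theta(a|x+\delta)\phi(\delta)/\tilde{\pi}_\theta(a|x)$ is exactly the posterior density $q_\theta(\delta|a,x)$, which integrates to one by the definition of $\tilde{\pi}_\theta$. The right-hand side therefore equals $\frac{1}{\sigma^2}\E_{\delta\sim q_\theta(\cdot|a,x)}[\delta]$, as claimed.
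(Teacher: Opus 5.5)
Your argument is correct, but it takes a different route from the paper. The paper keeps $x$ inside the policy. It differentiates $\pi_\theta(a|x+\delta)$ under the integral, trades $\nabla_x$ for $\nabla_\delta$, and integrates by parts (Stein's identity for the Gaussian) to move the derivative onto $\phi$. That step relies on the boundary term vanishing as $\|\delta\|_2\to\infty$. You instead translate $u=x+\delta$, so that $x$ appears only in the kernel $\phi(u-x)$, and then differentiate the Gaussian directly.

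Both computations arrive at the same integral, $\sigma^{-2}\int \delta\,\pi_\theta(a|x+\delta)\,\phi(\delta)\,d\delta$. Yours needs strictly less. The dominated-convergence exchange only requires $\pi_\theta(a|\cdot)$ to be measurable and bounded. You need no differentiability of $\pi_\theta$ in its input and no boundary-term argument. The paper's first line, $\nabla_x \tilde{\pi}_\theta = \int \nabla_x \pi_\theta(a|x+\delta)\,\phi(\delta)\,d\delta$, tacitly assumes $\pi_\theta(a|\cdot)$ is differentiable, which for networks with nonsmooth activations holds only almost everywhere. As a by-product, your version shows that $\tilde{\pi}_\theta(a|\cdot)$ is smooth in $x$ whatever the regularity of $\pi_\theta$, which is exactly the smoothing intuition the paper appeals to. You also explicitly check that $\tilde{\pi}_\theta(a|x)>0$, which the paper leaves implicit.

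The paper's route buys an interpretable intermediate identity: the smoothed gradient is the Gaussian average of the raw input gradients, $\nabla_x\tilde{\pi}_\theta(a|x)=\E_\delta[\nabla_x\pi_\theta(a|x+\delta)]$. It also frames the result explicitly as an instance of Stein's identity.
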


\begin{proof}
First, we could calculate the gradient of $\tilde{\pi}_\theta (a | x)$ with respect to $x$ as:
\begin{align*}
 \nabla_x  \tilde{\pi}_\theta (a | x) &:= \int \nabla_x \pi_\theta(a | x + \delta) \phi(\delta) d \delta \\
 &= \int \nabla_\delta \pi_\theta(a | x + \delta) \phi(\delta) d \delta \\
 &= - \int \pi_\theta (a | x + \delta) \nabla_\delta \phi(\delta) d \delta \\
 &= \int \frac{\delta}{\sigma^2} \pi_\theta (a | x + \delta) \phi(\delta) d \delta \\
 &= \frac{\tilde{\pi}_\theta(a | x)}{\sigma^2} \int \delta \frac{\pi_\theta(a | x+\delta) \phi(\delta)}{\tilde{\pi}_\theta(a | x)} d \delta \\
 &= \frac{\tilde{\pi}_\theta(a | x)}{\sigma^2} \E_{\delta \sim q_\theta(\cdot | a, x)} \delta,
\end{align*}
where the third and fourth equality use integration by parts (Stein’s identity for a Gaussian) to move the derivative from the complex term  onto the Gaussian density $\phi(\delta)$, whose gradient has a closed form and whose boundary term vanishes as $\|\delta\|\to\infty$.

Then take it into the expression of $\nabla_x \ln \tilde{\pi}_\theta (a | x)$, we have
\begin{equation*}
   \nabla_x \ln \tilde{\pi}_\theta (a | x) = \frac{1}{\sigma^2} \E_{\delta \sim q_\theta(\cdot| a, x)} [ \delta] .
\end{equation*}
\end{proof}

\section{Formal Statement and Proof for Theorem~\ref{thm:mismatch_informal}}\label{pf:mismatch}

\begin{condition}\label{cond:bias}
For any prompt $x$ and action $a$,  we have
\begin{equation*}
  \| \E_{\delta \sim q(\cdot | a, x)} [\delta] \|_2^2 \le Cd \sigma^2,
\end{equation*}
where $C > 0$ is a constant and $q_\theta(\cdot| a, x)$ is the posterior distribution of $\delta$ conditioned on $(a, x)$. 
\end{condition}

\begin{condition}\label{cond:distribution}
 There exist a constant $0 < \alpha \le 1$ such that for any $a$ and $x$, we have
 \begin{equation*}
   \alpha \cdot \tilde{\pi}_\theta (a | x) \le \pi_\theta (a | x) .
 \end{equation*}
\end{condition}

Condition~\ref{cond:bias} requires that the squared $\ell_2$ norm of the posterior mean of the perturbation is bounded by $\mathcal{O}(d \sigma^2)$, where $d$ is the input dimension and $\sigma^2$ is the variance of the Gaussian prior $\delta \sim \mathcal{N}(0, \sigma^2 I_d)$. This scaling is natural and aligns with standard concentration behavior in high-dimensional Gaussian settings.
Condition~\ref{cond:distribution} ensures that the perturbed policy $\tilde{\pi}_\theta$ is uniformly dominated by the original policy $\pi_\theta$ up to a constant factor $\alpha$. Intuitively, this condition guarantees that the injected perturbation is well controlled and does not overly distort the action distribution, which could otherwise lead to instability during training. In the limiting case $\delta \to 0$, the condition is trivially satisfied with $\alpha = 1$.

To be specific, we can obtain:
\begin{theorem}[Formal Version of Theorem \ref{thm:mismatch_informal}]\label{thm:mismatch}
 With Condition~\ref{cond:bias} and Condition~\ref{cond:distribution}, we have
\begin{equation}\label{eq:kl_mismatch2}
   \E_{x \sim d_0,\zeta} \mathrm{KL} \left( \tilde{\pi}_{\theta_\old} (a |x) \parallel \pi_{\theta_\old}^\infer(a|x)  \right) \le  - \ln \alpha + \frac{ C d \E \| \zeta \|_2^2}{2 \sigma^2}.
\end{equation}
\end{theorem}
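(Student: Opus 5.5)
We model the inference policy as $\pi^{\infer}_{\theta_\old}(a|x)=\pi_{\theta_\old}(a|x+\zeta)$. The plan is to split the KL into two pieces. The first compares $\tilde{\pi}_{\theta_\old}(\cdot|x)$ with $\tilde{\pi}_{\theta_\old}(\cdot|x+\zeta)$; this is where Lemma~\ref{lem:kernel} gives quantitative control of the score in $x$. The second compares $\tilde{\pi}_{\theta_\old}(\cdot|x+\zeta)$ with $\pi_{\theta_\old}(\cdot|x+\zeta)$; this is where Condition~\ref{cond:distribution} supplies the $-\ln\alpha$. Concretely, for fixed $x,\zeta$ we write
\begin{equation*}
\mathrm{KL}\big(\tilde{\pi}(\cdot|x)\,\|\,\pi(\cdot|x+\zeta)\big)=\E_{a\sim\tilde{\pi}(\cdot|x)}\Big[\ln\frac{\tilde{\pi}(a|x)}{\tilde{\pi}(a|x+\zeta)}\Big]+\E_{a\sim\tilde{\pi}(\cdot|x)}\Big[\ln\frac{\tilde{\pi}(a|x+\zeta)}{\pi(a|x+\zeta)}\Big].
\end{equation*}
Condition~\ref{cond:distribution}, applied at the point $x+\zeta$, gives $\tilde{\pi}(a|x+\zeta)/\pi(a|x+\zeta)\le 1/\alpha$ pointwise. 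So the second term is at most $-\ln\alpha$, uniformly in $x$ and $\zeta$.

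For the first term, we set $g(a,y)=\ln\tilde{\pi}(a|y)$. By Lemma~\ref{lem:kernel}, $\nabla_y g(a,y)=\sigma^{-2}\E_{q(\cdot|a,y)}[\delta]$. Condition~\ref{cond:bias} then gives $\|\nabla_y g(a,y)\|_2\le \sqrt{Cd}/\sigma$ for every $a$ and $y$. The mean value theorem along the segment from $x$ to $x+\zeta$ yields $|g(a,x)-g(a,x+\zeta)|\le \sqrt{Cd}\,\|\zeta\|_2/\sigma$.

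This crude Lipschitz bound only gives a first-order quantity $\sqrt{Cd}\,\E\|\zeta\|_2/\sigma$, not the stated $Cd\,\E\|\zeta\|_2^2/(2\sigma^2)$. To get the quadratic rate I would use that $\zeta$ is zero-mean, independent of $x$, and independent of the perturbation. Expanding $g(a,x+\zeta)=g(a,x)+\langle\nabla g(a,x),\zeta\rangle+R$, the linear term vanishes after taking $\E_\zeta$. The quadratic contribution I would bound by $\tfrac12\|\nabla g\|_2^2\|\zeta\|_2^2\le \tfrac{Cd}{2\sigma^2}\|\zeta\|_2^2$.

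An alternative that gives exactly this constant is to treat the first term as a KL between two smoothed distributions. One uses the standard inequality that KL under a shift is bounded by $\tfrac12\|\zeta\|^2$ times the Fisher information (integrated along the path). The Fisher information $\E_a\|\nabla_y\ln\tilde{\pi}(a|y)\|^2$ is at most $Cd/\sigma^2$ by Lemma~\ref{lem:kernel} and Condition~\ref{cond:bias}. Stam's inequality (Lemma~\ref{lem:stam}) can be cited to justify that Gaussian smoothing caps the Fisher information. Taking $\E_{x\sim d_0,\zeta}$ and adding the two pieces gives \eqref{eq:kl_mismatch2}.

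The main obstacle is this quadratic step: making rigorous that KL of the location-shifted family is bounded by $\tfrac12\|\zeta\|^2$ times a uniform Fisher bound. I would first try the path argument: define $F(t)=\mathrm{KL}(\tilde{\pi}(\cdot|x)\|\tilde{\pi}(\cdot|x+t\zeta))$. Note $F(0)=0$ and $F'(0)=0$, since the expected score is zero. Then bound $F''(t)$ by the Fisher information at $x+t\zeta$ along $\zeta$, modulo a term involving the Hessian of $\ln\tilde{\pi}$ that needs to be absorbed or argued lower order, which the informal version's "lower-order terms" permits.
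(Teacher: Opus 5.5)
Your skeleton matches the paper's. You use the same split of $\ln\bigl(\tilde{\pi}(a|x)/\pi(a|x+\zeta)\bigr)$, apply Condition~\ref{cond:distribution} at $x+\zeta$ to get $-\ln\alpha$, and use Lemma~\ref{lem:kernel} with Condition~\ref{cond:bias} to bound the squared score by $Cd/\sigma^2$. The step you flag is exactly where the paper writes ``$\approx$'', replacing $\mathrm{KL}(\tilde{\pi}(\cdot|x)\,\|\,\tilde{\pi}(\cdot|x+\zeta))$ by its second-order Fisher-information approximation. So the paper's derivation is itself only heuristic there, and your proposal inherits that gap rather than closing it.

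In your Taylor expansion of $g$, the quadratic term is the Hessian form $\frac12\zeta^\top\nabla_y^2 g\,\zeta$, not $\frac12\|\nabla g\|_2^2\|\zeta\|_2^2$. It becomes an expected squared score only after averaging over $a\sim\tilde{\pi}(\cdot|x)$, and only at $y=x$. The error term involves second or higher derivatives of $\ln\tilde{\pi}$ away from $x$, which Condition~\ref{cond:bias} does not control, since it bounds only the posterior \emph{mean}. The ``standard inequality'' in your alternative is not a general fact. It holds for exponential families in their natural parameter, where KL is a Bregman divergence of the log-partition function, but $x\mapsto\tilde{\pi}(\cdot|x)$ is not of that form. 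As you note, $F''(t)$ also carries a Hessian term that the Fisher information does not absorb.

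The missing idea is that the Lipschitz bound you dismissed as ``first-order'' already gives the quadratic rate with the exact constant. You showed $|\ln(P(a)/Q(a))|\le\epsilon:=\sqrt{Cd}\,\|\zeta\|_2/\sigma$ for every $a$, where $P=\tilde{\pi}(\cdot|x)$ and $Q=\tilde{\pi}(\cdot|x+\zeta)$. Put $r=P/Q\in[e^{-\epsilon},e^{\epsilon}]$, so that $\E_Q r=1$ and $\mathrm{KL}(P\|Q)=\E_Q[r\ln r]$. Since $r\ln r$ is convex, it lies below its chord over $[e^{-\epsilon},e^{\epsilon}]$. The chord is affine, so $\mathrm{KL}(P\|Q)$ is at most the chord's value at $r=\E_Q r=1$. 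That value is $\epsilon\tanh(\epsilon/2)\le\epsilon^2/2=Cd\|\zeta\|_2^2/(2\sigma^2)$. This holds for every $(x,\zeta)$, so adding $-\ln\alpha$ and taking expectations proves the theorem as a genuine inequality. It does not even use that $\zeta$ is zero-mean or independent of $x$.

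Your path argument can also be completed. The second-order analogue of Lemma~\ref{lem:kernel} gives $-\nabla_y^2\ln\tilde{\pi}(a|y)=\sigma^{-2}I_d-\sigma^{-4}\mathrm{Cov}_{q(\cdot|a,y)}(\delta)\preceq\sigma^{-2}I_d$. Hence $F''(t)\le\|\zeta\|_2^2/\sigma^2$ and $F(1)\le\|\zeta\|_2^2/(2\sigma^2)$. This dimension-free bound also follows from data processing: $\tilde{\pi}(\cdot|x)$ and $\tilde{\pi}(\cdot|x+\zeta)$ are the images of $\mathcal{N}(x,\sigma^2 I_d)$ and $\mathcal{N}(x+\zeta,\sigma^2 I_d)$ under the same kernel $\pi(\cdot|y)$. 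That, rather than Stam's inequality, is the precise form of your ``smoothing caps the Fisher information'' intuition. However, it yields the stated bound only when $Cd\ge1$.
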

The result in Theorem~\ref{thm:mismatch} shows that with properly learned value of $\sigma$, the training-inference mismatch can be controlled effectively during the training process.
\begin{proof}
In ALP, we could obtain the KL distance as:
\begin{align*}
   & \quad \E_{ \zeta} \E_{x \sim d_0} \mathrm{KL} \left( \tilde{\pi}_{\theta_\old} (a |x) \parallel \pi_{\theta_\old}^\infer(a|x)  \right)\\
   &= \E_\zeta \E_{x \sim d_0} \sum_a \tilde{\pi}_{\theta_\old} (a | x) \ln \left( \frac{\tilde{\pi}_{\theta_\old}(a | x)}{\pi_{\theta_\old}(a | x + \zeta)} \right)\\
   &\le \E_\zeta \E_{x \sim d_0} \sum_a \tilde{\pi}_{\theta_\old} (a | x) \ln \left( \frac{\tilde{\pi}_{\theta_\old}(a | x)}{ \alpha \tilde{\pi}_{\theta_\old}(a | x + \zeta)} \right)\\
   &\approx - \ln \alpha + \frac{1}{2} \E_{\zeta} \zeta^T \E_{x \sim d_0, a \sim \tilde{\pi}_{\theta_\old}(\cdot | x)} \nabla_x \ln \tilde{\pi}_{\theta_\old}(a | x) \nabla_x \ln \tilde{\pi}_{\theta_\old}(a | x)^T \zeta \\
   &\le - \ln \alpha + \frac{1}{2} \E \| \zeta \|_2^2 \E_{x \sim d_0, a \sim \tilde{\pi}_{\theta_\old}(\cdot | x)} \left\| \nabla_x \ln \tilde{\pi}_{\theta_\old}(a | x) \right\|_2^2 \\
   &= - \ln \alpha +  \frac{1}{2 \sigma^4} \E \| \zeta \|_2^2 \E_{x \sim d_0, a \sim \tilde{\pi}_{\theta_\old}(\cdot | x)} \left\| \E_{\delta \sim q_\theta(\cdot| a, x) } \delta \right\|_2^2 \\
   &\le - \ln \alpha + \frac{ C d \E \| \zeta \|_2^2}{2 \sigma^2}.
\end{align*}
where the first inequality is from Condition~\ref{cond:distribution}, the equality on the sixth line is from Lemma~\ref{lem:kernel}, and the last inequality is from Condition~\ref{cond:bias}.
So we could bound the KL distance as
\begin{equation}
   \E_{x \sim d_0,\zeta} \mathrm{KL} \left( \tilde{\pi}_{\theta_\old} (a |x) \parallel \pi_{\theta_\old}^\infer(a|x)  \right) \le  - \ln \alpha + \frac{ C d \E \| \zeta \|_2^2}{2 \sigma^2}.
\end{equation}

\end{proof}

\section{Proof for Theorem~\ref{thm:smooth_informal}}\label{pf:smooth}

\begin{theorem}\label{thm:smooth}
Define $\mathcal{I}(x, \theta) = \|\nabla^2_\theta J(\theta)\|_2,\quad \tilde{\mathcal{I}}(x, \theta) = \|\nabla^2_\theta J_{\mathrm{ALP}}(\theta, \sigma)\|_2$. If there exists a smallest constant $0<c<1$ such that
\begin{equation*}
    \int \bm{1} \left( \mathcal{I}(x, \theta) \ge c/2 \cdot \max_x \mathcal{I}(x, \theta)\right) dx < + \infty,
\end{equation*}
 with some proper distribution $\delta \sim \mathcal{N}(0, \sigma^2 I)$, we have
\begin{equation*}
\sup_x \tilde{\mathcal{I}}(x, \theta) \le c \cdot \sup_x \mathcal{I}(x, \theta).
\end{equation*}
\end{theorem}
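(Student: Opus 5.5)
The plan is to read $\tilde{\mathcal{I}}(x,\theta)$ as the curvature of the smoothed objective at representation $x$. For one-layer analysis the perturbed surrogate is $J_{\mathrm{ALP}}(\theta,\sigma)(x)=\E_\delta J(\theta)(x+\delta)$. Interchanging $\nabla^2_\theta$ with the Gaussian expectation, which is justified by dominated convergence under the mild regularity assumed, gives
\begin{equation*}
\nabla^2_\theta J_{\mathrm{ALP}}(\theta,\sigma)(x)=\int \nabla^2_\theta J(\theta)(x+\delta)\,\phi(\delta)\,d\delta .
\end{equation*}
By convexity of the spectral norm (Jensen), this yields
\begin{equation*}
\tilde{\mathcal{I}}(x,\theta)\le \int \mathcal{I}(x+\delta,\theta)\,\phi(\delta)\,d\delta=(\mathcal{I}(\cdot,\theta)*\phi)(x).
\end{equation*}
So the theorem reduces to bounding the supremum of a Gaussian convolution of the nonnegative function $\mathcal{I}(\cdot,\theta)$ by $c\cdot M$, where $M:=\sup_x\mathcal{I}(x,\theta)$.

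Next I split space according to the level set $S:=\{x':\mathcal{I}(x',\theta)\ge cM/2\}$, which the hypothesis says has finite Lebesgue measure $|S|<\infty$. For any $x$,
\begin{equation*}
(\mathcal{I}*\phi)(x)\le M\int_{S}\phi(x'-x)\,dx'+\frac{cM}{2}\int_{S^c}\phi(x'-x)\,dx'\le M\,|S|\,\sup_u\phi(u)+\frac{cM}{2}.
\end{equation*}
Since $\sup_u\phi(u)=(2\pi\sigma^2)^{-d/2}$, choosing $\sigma$ large enough that $|S|(2\pi\sigma^2)^{-d/2}\le c/2$ makes the first term at most $cM/2$. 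This gives $\sup_x\tilde{\mathcal{I}}(x,\theta)\le cM$ uniformly in $x$, and this choice of $\sigma$ is the "proper distribution" in the statement.

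The main obstacle is the first step: making precise what $J$ and $J_{\mathrm{ALP}}$ mean as functions of $x$, and justifying that the $\theta$-Hessian of the perturbed objective is the Gaussian average of the unperturbed Hessians. The ALP objective involves clipping and a ratio whose denominator is unperturbed. I would therefore work with the expected-policy surrogate $\tilde\pi_\theta(a|x)=\E_\delta\pi_\theta(a|x+\delta)$ at the per-sample level, where the integrand is linear in $\pi_\theta(a|x+\delta)$ inside a fixed advantage-weighted ratio, so the interchange is clean. The clipping nonsmoothness I would handle by restricting to the unclipped region, or by noting that $\mathcal{I}$ is defined via the same pointwise quantity.

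A secondary subtlety is the tradeoff in $\sigma$: a large $\sigma$ is what makes $|S|\sup\phi$ small. This is consistent with the stated existence claim but is in tension with Condition~\ref{cond:distribution}. I would simply note this tension in a remark rather than resolve it.
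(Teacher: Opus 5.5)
Your proposal is correct and follows the paper's proof: you move the spectral norm inside the Gaussian average by Jensen, split on the finite-measure level set $\{\mathcal{I}\ge cM/2\}$, and take $\sigma$ large enough that the Gaussian mass of the shifted level set is at most $c/2$. The one difference is how you bound that mass. You use $|S|(2\pi\sigma^2)^{-d/2}$, while the paper compares it with the mass of a centered ball of equal volume; the paper's bound is sharper but needs a rearrangement argument, whereas yours is more elementary and gives an explicit threshold on $\sigma$.
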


To start with, we denote $b:= \max_x \mathcal{I}(x,\theta)$, and there exists $c\in(0,2)$ such that 
\begin{equation*}
    \int \bm{1} ( \mathcal{I}(x, \theta) \ge c/2 \cdot b ) \cdot dx = C < + \infty.
\end{equation*}
Then for any $\epsilon > 0$, we can choose proper std value $\sigma$, such that for any r satisfying that the volume of $r$-radius ball is no larger than $C$, i.e, $\mu\{ \| \delta \|_2 \le r  \} \le C$, we have
\begin{equation}\label{eq:cond}
    \mathcal{P} \left( \| \delta \|_2 \le r \right) \le \epsilon.
\end{equation}
Then for any $x, a$, we have
\begin{align*}
   & \quad \left\| \E_\delta \nabla^2_\theta  \E_{a \sim \pi_{\theta_\old}^\infer(a|x)} \left[ \frac{\pi_\theta (a|x + \delta)}{\pi_{\theta_\old}^\infer(a|x)} \cdot \hat{A}(a) \right]  \right\|_2 \le \E_\delta \left\|  \nabla^2_\theta  \E_{a \sim \pi_{\theta_\old}^\infer(a|x)} \left[ \frac{\pi_\theta (a|x + \delta)}{\pi_{\theta_\old}^\infer(a|x)} \cdot \hat{A}(a) \right]  \right\|_2 \\
    &= \int_{\mathcal{I}(x+\delta, \theta) \ge bc /2} \mathcal{I}(x+\delta, \theta)  \phi(\delta) d \delta 
    + \int_{\mathcal{I}(x+\delta, \theta) < bc /2} \mathcal{I}(x+\delta, \theta)  \phi(\delta) d \delta  \\
    &\le b  \cdot \int_{\mathcal{I}(x+\delta, \theta) \ge bc /2}  \phi(\delta) d \delta  + \frac{bc}{2}  \\
    &\le (\frac{c}{2} + \epsilon) \cdot b,
\end{align*}
where the last inequality is from the fact that the measure of the set satisfying $\mathcal{I}(x, \theta) \ge c/2 \cdot b$ equals $C$, and for a Gaussian distribution, the probability on such a set is smaller than the probability on a zero-center ball with volume $C$. Combining this fact with the assumption in \eqref{eq:cond}, we derive the last inequality.

While $\epsilon \le \frac{c}{2}$, we can finish the proof.

\section{Additional Experimental Details}\label{sec:Additional Experimental Details}
\paragraph{Single-turn details.}
The single-turn training set is constructed from the math subset of Guru RL-92k and a $75$K OpenR1 subset. Guru contains $91.9$K high-quality samples spanning six reasoning-intensive domains, while OpenR1 is derived from a larger corpus of math problems with two to four reasoning traces generated by DeepSeek R1. For prompts in the single-turn task, we filter OpenR1 examples by their average score in $(0.2,0.8)$, and then deduplicate the merged corpus by user-prompt content. Prompts are converted into a system-plus-user format with a fixed system instruction that requests a final \verb|\boxed{}| answer.
Rollouts are generated with vLLM\citep{kwon2023efficient}, while training is performed with Fully Sharded Data Parallel (FSDP)\citep{zhao2023pytorchfsdpexperiencesscaling}. We set \texttt{max\_prompt\_length=2048} and \texttt{max\_response\_length=2048}. We use dual-clip PPO with \texttt{clip\_ratio\_c=10.0} throughout. Each step samples 512 prompts and generates $n=8$ responses per prompt. The policy is optimized with AdamW (learning rate $10^{-6}$; weight decay $0.01$). Rewards are computed via rule-based math verification using the \texttt{math-verify} Python package, by extracting the final \verb|\boxed{}| answer and matching it to the ground truth.
We set the KL loss coefficient to $0.001$ and entropy loss coefficient to $0.001$. The models are trained with $8$ H100 GPUs and $4$ A100 GPUs. For the additional Qwen3-4B validation in Table~\ref{tab:single_turn_qwen3_4b}, we keep the same training and evaluation protocol and only change the backbone model.

We further provide the clipping or masking details for the baselines in the following table.
\begin{table}[ht]
\small
\begin{center}
\begin{tabular}{llll}
\hline
\textbf{Baselines} & $\pi_{\theta}^\train/ \pi_{\theta_\old}^\train$ & $\pi_{\theta_\old}^\train/ \pi_{\theta_\old}^\infer$ & $\pi_{\theta}^\train/ \pi_{\theta_\old}^\infer$\\
\hline
GRPO & token & None & None \\
Token-MIS & token & seq & None \\
Seq-MIS & seq  & seq & None\\
Seq-Bypass & None  & None & seq\\
\hline
\end{tabular}
\end{center}
\caption{Settings of parameters of importance ratios}
\label{t:Settings of parameters of importance ratios}
\end{table}
For the clipping or masking threshold parameters, they are chosen to achieve stable performance for different settings respectively. Specifically, for token-level importance ratio, the lower clipping ratio is $0.2$, and the higher clipping ratio is $0.2$ or $0.28$; for sequence-level importance ratio, the lower clipping ratio is $0.5$, and the higher one is $3.0$. Besides, for MIS, the masking threshold is $2.0$.

\paragraph{Multi-turn details.}
For multi-turn experiments, we use Tool-Integrated Reasoning (TIR) with a Python interpreter, allowing the model to execute code for intermediate verification and symbolic computation. We adapt the SimpleTIR codebase and mask void turns in the loss, where void turns are turns without tool calls and without final answers. The training data merges Math3-5 examples from SimpleRL, Deepscaler, and rStar2-Agent. Training starts from Qwen2.5-7B-base under the zero-RL setting, with rollout batch size $512$, mini-update size $32$, maximum response length $16$K, and at most $5$ code-execution turns per episode.


\begin{figure}[t]
    \centering
    \includegraphics[width=0.95\linewidth]{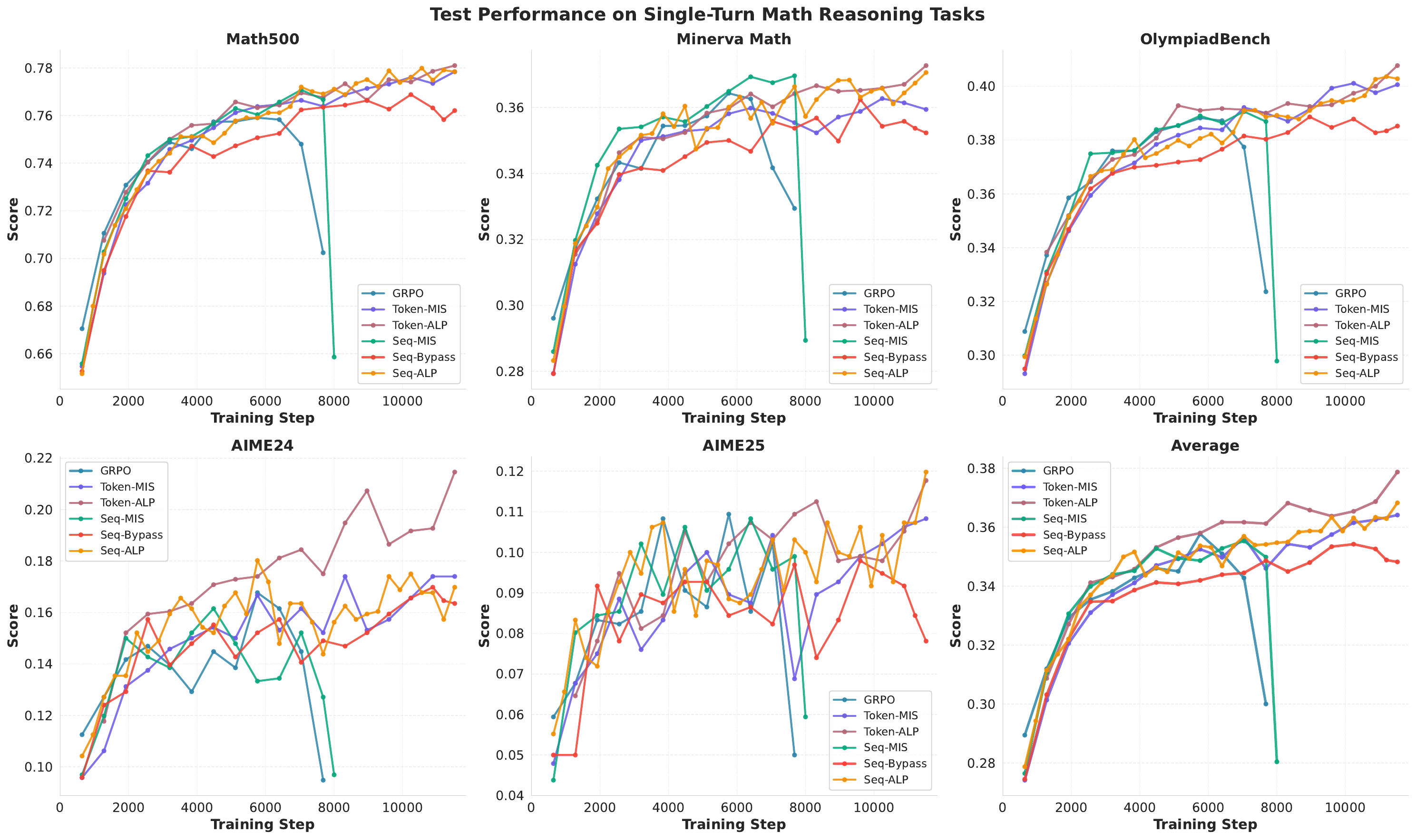}
    \caption{Single-turn evaluation on the test dataset. We compare final-task performance of GRPO, MIS, Bypass, and ALP, highlighting that stable optimization dynamics translate into improved reward/generalization.}
    \label{fig:single_turn_test}
\end{figure}

\begin{figure}[t]
    \centering
    \includegraphics[width=0.95\linewidth]{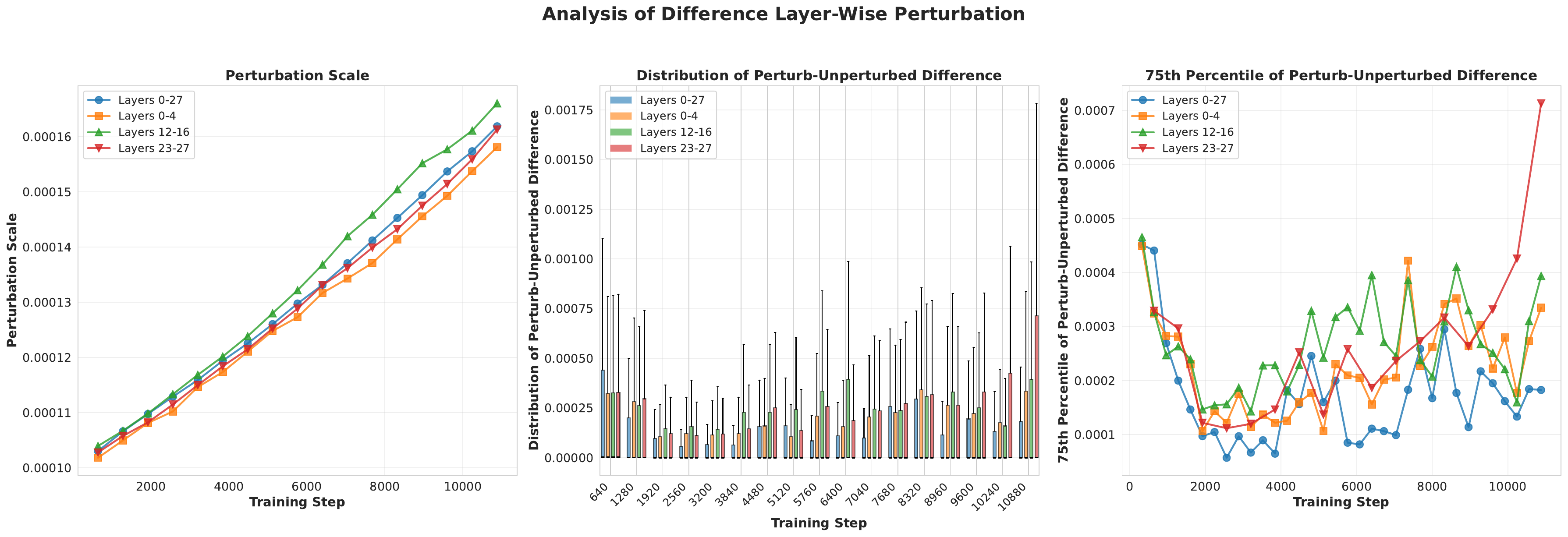}
    \caption{Layer-wise perturbation diagnostics across target choices: (left) mean adaptive noise scale $\sigma$, (middle) distribution of perturbation-induced probability differences $|\Delta p|$, and (right) tail behavior measured by the 75th percentile of $|\Delta p|$ over training steps.}
    \label{fig:ablation_diagnostics}
\end{figure}

\subsection{Hyperparameter sensitivity}\label{sec:ablation_hparam}

We study the sensitivity of ALP to two key hyperparameters: the initial Gaussian noise standard deviation (std) and the learning rate (lr) for the perturbation parameters. We evaluate the resulting policies on Math500, Minerva Math, OlympiadBench, AIME24, and AIME25, and report the average@32 score.

\begin{figure}[t]
    \centering
    \includegraphics[width=0.95\linewidth]{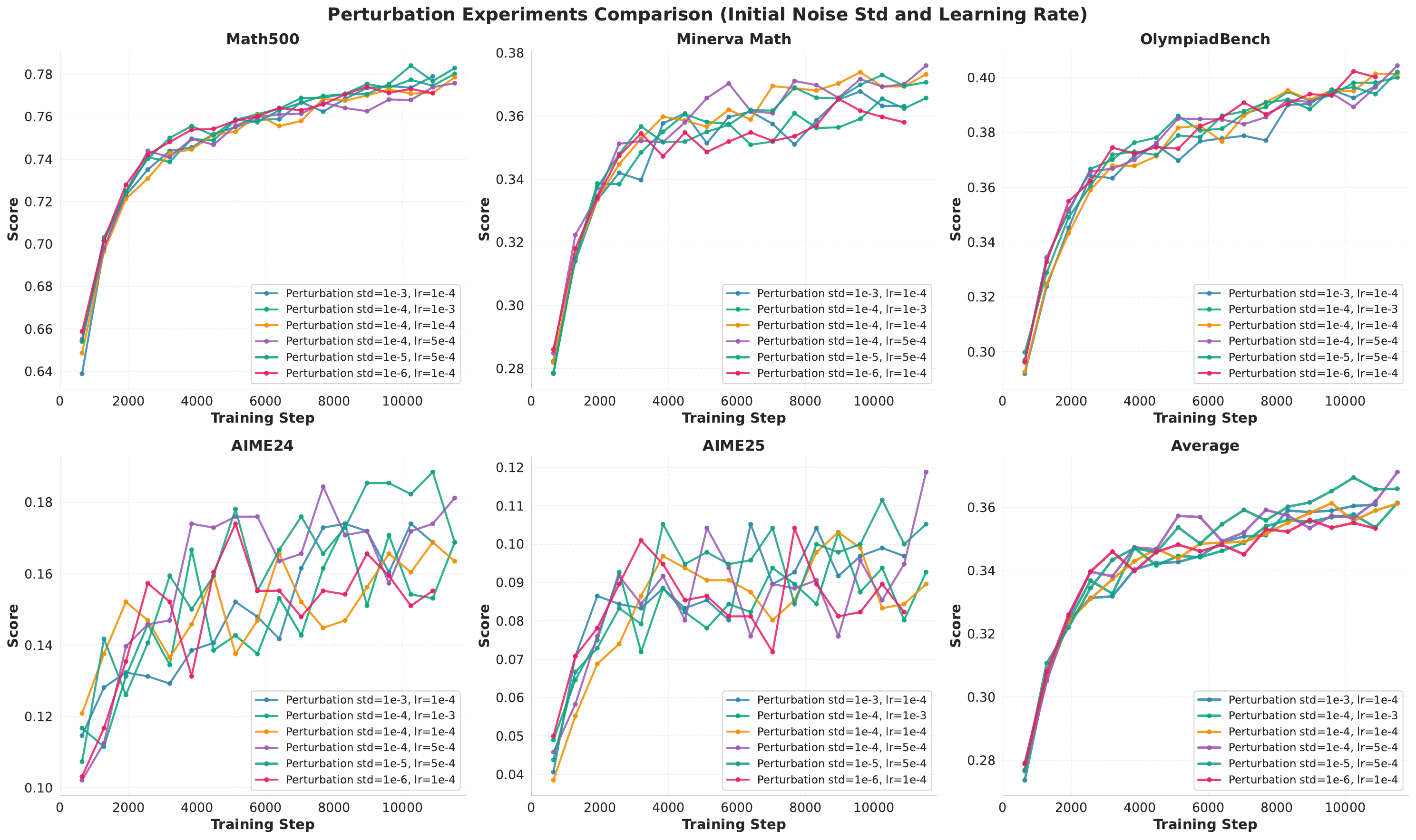}
    \caption{Hyperparameter sensitivity in the single-turn setting. We vary the initial Gaussian noise standard deviation (std) and its learning rate (lr), and report average@32 across Math500, Minerva Math, OlympiadBench, AIME24, and AIME25.}
    \label{fig:ablation_hparam}
\end{figure}

Figure~\ref{fig:ablation_hparam} shows that performance curves are tightly clustered across a broad range of (std, lr) values and improve monotonically over training, suggesting that ALP is not overly sensitive to these hyperparameters. Based on the final-stage performance, we use std $= 1\times 10^{-4}$ and lr $= 5\times 10^{-4}$ as the default configuration in subsequent experiments.

\begin{figure}[t]
  \centering
  \includegraphics[width=0.95\linewidth]{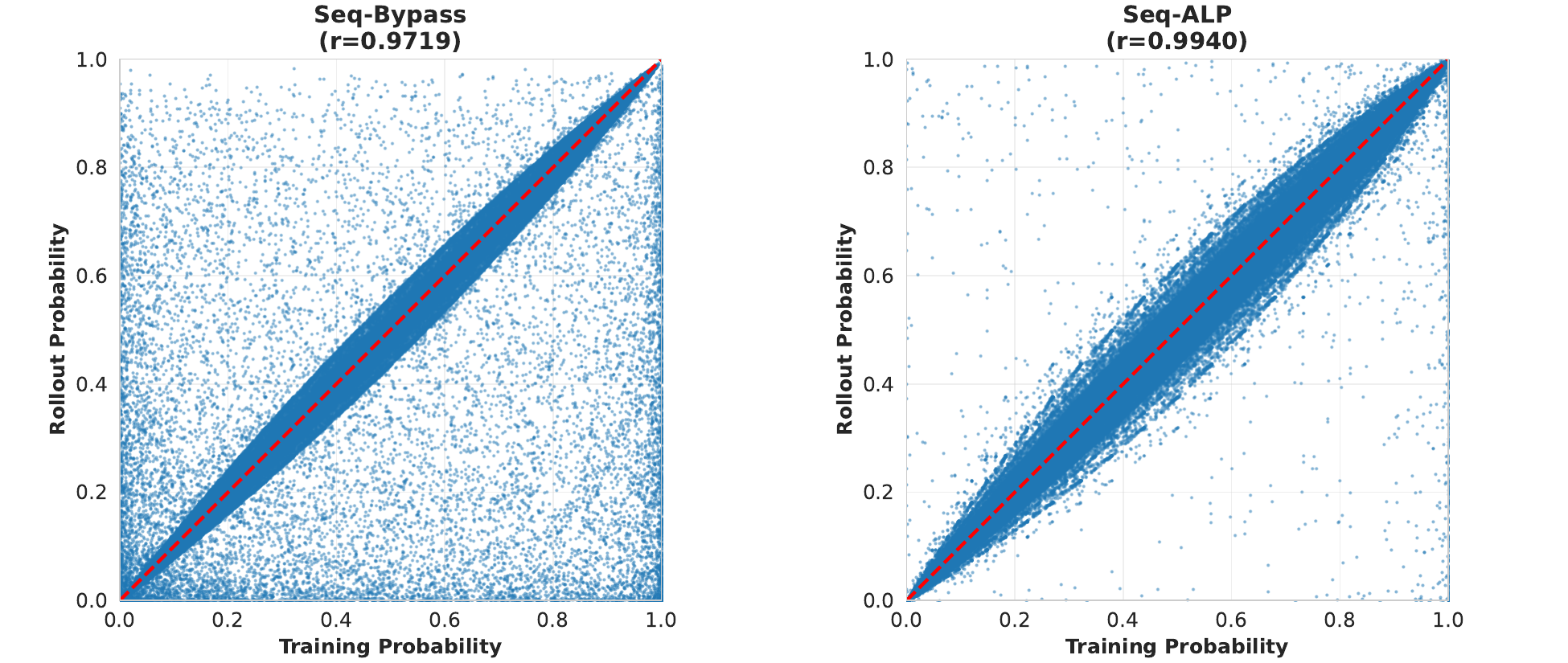}
  \caption{
  Token-level training-inference mismatch at step 140 in the multi-turn setting.
  Each point is one response token, with x-axis denoting the training probability
  $\pi_{\theta_{\mathrm{old}}}^{\mathrm{train}}$ and y-axis denoting the rollout-time token probability
  $\pi_{\theta_{\mathrm{old}}}^{\mathrm{infer}}$.
  The red dashed diagonal is $y=x$; tighter concentration around this line indicates smaller
  training-inference mismatch.
  The Pearson correlation coefficient $r$ is reported in each panel title.
  }
  \label{fig:multi_turn_train_rollout_step140}
\end{figure}

\end{document}